\Crefname{dfn}{Definition}{Definitions}
\newcommand{\ra}[1]{\renewcommand{\arraystretch}{#1}}
\definecolor{applegreen}{rgb}{0.55, 0.71, 0.0}
\definecolor{lightapplegreen}{rgb}{0.95, 1, 0.9}
\definecolor{alizarin}{rgb}{0.82, 0.1, 0.26}
\definecolor{lightalizarin}{rgb}{1, 0.95, 0.98}
\definecolor{ablue}{rgb}{0.36, 0.54, 0.66}
\definecolor{lightablue}{rgb}{0.92, 0.95, 0.97}
\newcommand{\eps}{\epsilon}
\newcommand{\be}{\beta}
\newcommand{\Th}{\Theta}
\let\th\relax\newcommand{\th}{\theta}
\newcommand{\la}{\lambda}
\newcommand{\si}{\sigma}
\newcommand{\Ga}{\Gamma}
\newcommand{\de}{\delta}
\newcommand{\N}{\mathbb{N}}
\newcommand{\R}{\mathbb{R}}
\let\aa\relax \newcommand{\aa}{\mathcal{A}}
\newcommand{\cc}{\mathcal{C}}
\newcommand{\rr}{\mathcal{R}}
\newcommand{\xx}{\mathcal{X}}
\newcommand{\yy}{\mathcal{Y}}
\newcommand{\zz}{\mathcal{Z}}
\newcommand{\mm}{\mathcal{M}}
\newcommand{\nn}{\mathcal{N}}
\newcommand{\hh}{\mathcal{H}}
\newcommand{\bb}{\mathcal{B}}
\newcommand{\ff}{\mathcal{F}}
\newcommand{\cS}{\mathcal{S}}
\newcommand{\ii}{\mathcal{I}}
\newcommand{\D}{{\rm d}}
\newcommand{\scal}[4]{{}_{#3}\langle #1,#2\rangle_{#4}}
\newcommand{\nor}[2]{\| #1\|_{#2}}
\DeclareMathOperator{\ran}{Ran}
\DeclareMathOperator{\spn}{span}
\DeclareMathOperator{\TV}{TV}
\DeclareMathOperator{\Ext}{Ext}
\DeclareMathOperator*{\argmin}{arg\,min}
\theoremstyle{plain}
\newtheorem{thm}{Theorem}[section]
\newtheorem{lem}[thm]{Lemma}
\newtheorem{prop}[thm]{Proposition}
\newtheorem{cor}[thm]{Corollary}
\theoremstyle{plain} 
\newtheorem{dfn}[thm]{Definition}
\newtheorem{asp}{Assumption}
\theoremstyle{plain} 
\crefname{exa}{Example}{Examples}
\theoremstyle{plain} 
\newtheorem{rmk}[thm]{Remark} 
\crefname{rmk}{remark}{remarks}
\newcommand{\ern}[1]{{\color{black}#1}}
\begin{document}
\title{\bf Neural reproducing kernel Banach spaces \\
and representer theorems for deep networks
}

\author{F.~Bartolucci}
\address[F.~Bartolucci]{Analysis Group - Delft Institute of Applied Mathematics, TU Delft, Netherlands}
\email{f.bartolucci@tudelft.nl}

\author{E.~De Vito}
\address[E. De Vito]{MaLGa - DIMA, University of Genoa,
Italy
}
\email{ernesto.devito@unige.it}

\author{L.~Rosasco}
\address[L. Rosasco]{MaLGa - DIBRIS, University of Genoa,
Italy
\& CBMM, MIT \& IIT}
\email{lorenzo.rosasco@unige.it}

\author{S.~Vigogna}
\address[S. Vigogna]{RoMaDS - Department of Mathematics, University of Rome Tor Vergata,
Italy
}
\email{vigogna@mat.uniroma2.it}

\maketitle

\begin{abstract}
Characterizing the function spaces defined by  neural networks helps understanding the corresponding learning models and their inductive bias. 
While in some limits neural networks  correspond to function spaces that are Hilbert spaces,  these regimes do not capture  the properties of the networks used in practice.  
Indeed, several results have shown that shallow networks can be better characterized in terms of suitable Banach spaces. However, analogous results for deep networks are limited.
In this paper we show that deep neural networks  define suitable reproducing kernel Banach spaces.
 These spaces are equipped with  norms that enforce a form of sparsity, enabling them to adapt to potential latent structures within the input data and their representations. In particular, by leveraging the theory of reproducing kernel Banach spaces, combined with variational results, we derive representer theorems that justify the finite architectures commonly employed in applications. Our study extends analogous results for shallow networks and represents a step towards understanding the function spaces induced by neural architectures used in practice.
\end{abstract}

\section{Introduction}

Neural networks define functions by composing linear and nonlinear maps in a multi-layer (deep) architecture. While easy to implement,  the corresponding models are hard to analyze since they are 
nonlinearly parameterized. The study of the function spaces defined by different neural network architectures provides insights into the corresponding learning models. 
In particular, it provides indications on the underlying  inductive bias, namely,  which functions can be approximated and learned efficiently by a given class of networks. 

In some over-parameterized regimes, neural networks can be seen to define Hilbert spaces of functions and in particular 
reproducing kernel Hilbert spaces (RKHS) \cite{6fc4e20b-e6b8-3651-8988-bc9975065f31}. It is a classical observation that shallow networks  with infinitely many random units correspond to RKHS,
with reproducing 
kernels depending on the considered activation function \cite{neal2012bayesian}.
This regime, also known as  the Gaussian Process (GP) limit, has connections with models such as random features \cite{NIPS2007_013a006f}. The limits of more complex, possibly non-shallow  architectures can also be derived and characterized in terms of RKHS, see e.g.  \cite{hanin}.
Another infinite-width limit in which neural networks are described by RKHS is the so-called lazy training regime \cite{chizat-lazy}.
In this limit,  the network weights evolve little during the optimization and can be well approximated by a linear approximation around a random initialization.  Also in this case,  the corresponding function spaces are RKHS, and the associated kernel is called neural tangent kernel (NTK) \cite{JacotHG18}. Again, neural tangent kernels and corresponding RKHS can be derived for a variety of architectures, see e.g. \cite{NEURIPS2019_c4ef9c39}.

However, neither the above GP/kernel limit nor the NTK/lazy training regime appear  
to capture key aspects of neural network models \cite{NEURIPS2020_a9df2255,bietti21} used in practice.
Results for shallow networks 
suggest that neural networks might favor functions with small norms that are not 
Hilbertian but rather associated with Banach spaces \cite{JMLR:v18:14-546}. In turn, representer theorems 
associated to such norms allow to derive finite-width networks commonly used in practice from variational principles \cite{savarese2019infinite,ongie2019function,unser2020unifying,parhi2021banach,doi:10.1137/21M1418642}.
These observations have sparked   interest in understanding  Banach spaces 
associated to neural networks. One possibility is to consider extensions of classical splines 
\cite{unser2017splines,unser2019native,JMLR:v24:22-0227,unser2023kernel}. Another possibility is to consider reproducing kernel Banach spaces (RKBS) \cite{JMLR:v10:zhang09b,Lin2019OnRK}, see e.g. \cite{BARTOLUCCI2023194} and references therein.

In this paper, we develop the latter perspective tackling the extension from shallow to deep networks.  
The study of Banach spaces associated to deep architectures and corresponding representer theorems was started in \cite{doi:10.1137/21M1418642}, where deep architectures  with ReLU activations and  finite rank constraints at each layer are considered. The latter requirement is not natural and  is mainly due  to technical reasons. Indeed,  finite rank constraints allow for the construction of  layers as concatenation of vector valued functions studied for shallow networks. In our study, we propose an approach that avoids finite rank constraints and allows to consider more general activations. This requires more substantial developments employing vector  measures of finite variation to address the challenges posed by potentially  infinite-dimensional hidden layers. 
 Our first contribution is to define a reproducing kernel Banach space which describes an infinite-width limit of a 
 deep neural network with an associated norm promoting sparsity. We call such a space a neural RKBS. 
 Then,  we provide a representer theorem for a large class of nonlinearities that shows how   neural networks minimizing empirical objectives can be taken to have a finite width at every layer. This result  extends analogous results for shallow networks. It implies that commonly used networks are optimal in the sense that they are  solutions of a suitable variational problem. 
 
The rest of the paper is organized as follows. In \Cref{sec:preliminaries} we provide background for the study of the paper. In \Cref{sec:vector-valued-RKBSs} we review the main technical ingredients of our construction,
namely reproducing kernel Banach spaces and vector Radon measures. In \Cref{sec:neuralRKBS} we introduce deep integral RKBS to model functional properties of deep neural networks, leading to the construction of neural RKBS. In \Cref{sec:representer-theorems} we prove our represent theorems for deep neural networks. In \Cref{sec:discrete-neural-rkbs}, we construct a particular instance of neural
RKBS with a countably infinite number of neurons per hidden layer, which permits a more explicit form of the relative representer theorem.
In \Cref{sec:appendix} we collect variational results and extreme point characterizations used to prove our representer theorems. Table~\ref{tab:notation} summarizes  the main notation we use in the
paper.
\begin{table}[h]

\caption{Notation} \label{tab:notation}

\centering

\ra{1.5}

\resizebox{\columnwidth}{!}{

\begin{tabular}{l l l l}

\textsf{symbol} & \textsf{definition} & \textsf{symbol} & \textsf{definition} \\

\hline
$\mathcal{X},\yy$ & Banach spaces & $\Theta$ & locally compact second contable
                                   space \\

$ B(\mathcal{X},\yy) $ & bounded linear maps $ \mathcal{X} \to \yy $ 
& $\bb(\Theta) $ & Borel
$\sigma$-algebra on $\Theta$ \\

$ \mathcal{X}' $ & continuous dual of $\mathcal{X}$
& $ \cc_0(\Theta,\yy) $ & continuous functions $ \Theta \to \yy $ vanishing at $\infty$
\\

$ {}_\mathcal{X}\langle \cdot , \cdot \rangle_{\mathcal{X}'} $ & pairing on $\mathcal{X},\mathcal{X}'$
&
$ \cc_0(\Theta) $ & $ \cc_0(\Theta,\R) $ \\

$ \langle \cdot , \cdot \rangle_\mathcal{X} $ & inner product on the Hilbert
                                      space $\mathcal{X}$
 & $ \mm(\Theta,\yy) $ & vector measures on $\Theta$ with values in $\yy$

\\

$ \| \cdot \|_\mathcal{X} $ & norm on $\mathcal{X}$
& $ \mm(\Theta) $ & $ \mm(\Theta,\R) $
\\

$ B_\mathcal{X}(r) $ & ball on $\mathcal{X}$ of radius $r$  & $ \delta_\theta $  & Dirac delta at $\theta$ \\

$\Ext(Q)$ & extremal points of $Q\subset \mathcal{X}$ & $ \| \cdot \|_{\TV} $ & total variation norm
 \\

\hline

\end{tabular}

}

\end{table}

\section{Preliminary Discussion} \label{sec:preliminaries}

In this section, we provide the necessary background by introducing the notation for deep neural networks, reviewing the infinite-width limit of shallow networks, and presenting a high-level overview of the proposed infinite-width limit for deep architectures.

\subsection*{Neural networks}

We start by setting up some notation and
introduce  fully connected feed-forward neural networks.

\begin{dfn}[Fully connected feed-forward neural network] \label{def:nn}
Let $ \sigma : \R \to \R $ be a (nonlinear) function,
$ L \ge 1 $ an integer
and
\[ d=d_0, d_1,\dots,d_L,d_{L+1}=p \ge 1 \] 
a family of $L+2$  integers.
A function $ f : \R^d \to \R^p $  is called a 
fully connected feed-forward neural network from
$\R^d$ to $\R^p$ with activation function $\sigma$, depth $L$ and widths $ d_1,\dots,d_L $ if
\[
f(x)  = x^{(L+1)} 
\]
where, for each $x\in\R^d$, the vector $x^{(L+1)}\in\R^p$ is defined by  the following recursive equation
\begin{equation} \label{eq:nn}
 \begin{cases}
 x^{(1)} = W^{(1)} x + b^{(1)} & \in \R^{d_1} \\[5pt]
  x^{(\ell+1)} = W^{(\ell+1)} \sigma( x^{(\ell)} ) + b^{(\ell+1)} &
  \in \R^{d_{\ell+1}}, \qquad \ell = 1 , \dots , L,
 \end{cases}
\end{equation}
for some weights $ W^{(\ell)} \in \R^{d_\ell
  \times d_{\ell-1}} $ and biases (or offsets)
$ b^{(\ell)} \in \R^{d_\ell} $.
We call a neural network shallow (or a one-hidden layer network) if $L=1$, deep if $L>1$.
\end{dfn}
In~\eqref{eq:nn} the activation function $\sigma$ is assumed to
be applied on vectors component-wise, and the vector $ \sigma( x^{(\ell)} ) $ for $ \ell = 1 , \dots , L $
is the $\ell$-th \emph{hidden layer} of the network.
The input and output dimensions $d=d_0,p=d_{L+1}$
are fixed by the problem, and $ d_\ell $ is the number of neurons (or
units)  at the $\ell$-th hidden layer. A \emph{neuron} is a function of the form $ \phi(z) = \sigma( \langle z , w \rangle + b ) $.
Hence, the definition of a network requires specifying an activation function $\sigma$, a depth $L$, and the widths $d_1,\dots,d_L$. These parameters define the network architecture. Given an architecture, the parameters $W^{(\ell)},b^{(\ell)}$ in $ \R^{d_\ell \times d_{\ell-1}} \times \R^{d_\ell} $ are found optimizing an empirical objective function.

The neural networks with a fixed architecture form a subset  $\mathcal F_{NN}$ of
functions from $\R^d$ to $\R^p$ parameterized nonlinearly by weights and biases.
This is already clear considering scalar-valued shallow neural networks of width $K\in\N$, that is
\begin{equation}\label{eq:onehidddennns}
\mathcal F_{NN}=\left\{f:\R^d \to \R : f(x)=\sum_{k=1}^K v_k\sigma(w_k^\top x +b_k)+b, w_k\in\R^d, v_k, b, b_k\in\R\right\}.
\end{equation}

The nonlinear dependence on the parameters is a key challenge in defining
and characterizing   the corresponding function spaces. For shallow networks,  considering
the so called infinite-width limit provides an approach to tackle this
question.

\subsection*{Infinite-width limit of shallow neural networks.} 
In the context of shallow networks, the  infinite-width limit corresponds to consider 
functions  parameterized by measures rather than weights, that is
\begin{equation}
\label{eq:infinite_width_nn}
    f_{\mu}(x) = \int_{\R^d\times\R} \sigma( w^\top x + b)\, {\rm d}\mu(w,b),\qquad \mu \in \mathcal{M}(\R^d\times\R).
\end{equation}
Generalizing further, one can consider an arbitrary input space $\xx$
and functions $f$ from $\xx$ to $\R$ of the form
\begin{equation}
\label{eq:infinite_width_nn-rho}
    f_{\mu}(x) = \int_\Theta \rho(x,\th)\, {\rm d}\mu(\th),\qquad \mu \in \mathcal{M}(\Theta)  .
\end{equation}
for a suitable locally compact second countable parameter space $\Th$
and basis function $ \rho : \xx \times \Th \to \R $.

We add two observations.
First,  the  expression in \eqref{eq:infinite_width_nn} needs some care. Indeed,  commonly 
used activation functions are not integrable, and typical parameter spaces $\Theta$ are non-compact,
so that the integrals in \eqref{eq:infinite_width_nn} (and \eqref{eq:infinite_width_nn-rho}) may not converge.
For example,  in \cite{BARTOLUCCI2023194}  the activation is multiplied by a smoothing function to ensure integrability. The corresponding function space is shown to be a  reproducing kernel Banach space, with norm  induced by the total variation norm on measures. 
Second, the finite shallow networks in the set \eqref{eq:onehidddennns}
can be recovered from the infinite-width model \eqref{eq:infinite_width_nn} by taking finite linear combinations of Dirac deltas as measures. Interestingly, finite networks can be shown to emerge from variational principles. Specifically, if the minimization of an empirical objective is performed over measures,  instead of weights, then the optimal solutions are atomic measures. In other words, finite networks are optimal solutions of empirical minimization problems  over possibly infinite dimensional networks.  We refer for example to \cite{unser2023kernel}  for an account of recent works  addressing the above topics. As discussed next, our main motivation is studying similar questions in the context of deep networks.

\subsection*{Infinite-width limit of deep neural networks}
Previous work on Banach spaces associated with deep architectures, and corresponding representation theorems, focus on layers which are concatenation of  of infinite-width shallow neural networks with finite-dimensional outputs. As a result, this approach inherently restricts the deep architecture to finite-rank constraints at every layer \cite{doi:10.1137/21M1418642}. In contrast, our work presents an alternative framework that avoids such constraints and accommodates a broader class of activation functions. This generalization requires addressing the challenges posed by potentially infinite-dimensional hidden layers. To this end, we construct deep architectures as compositions of RKBS of integrable functions valued in infinite-dimensional spaces. More precisely, we begin by considering the direct sum of integral RKBS, forming a linear space, and use this structure to parameterize spaces of composed functions, thereby yielding a corresponding nonlinear function space, which we call a deep integral RKBS. By further specializing this construction to integral functions defined through activation functions, we arrive at what we call a neural RKBS. This formulation provides a natural and rigorous connection with commonly used deep neural network models. Finally, we use these function spaces to derive novel representer theorems for deep networks. These representer theorems characterize the minimization of empirical objective functions over deep RKBS. In particular, they show that finite networks are optimal from a variational perspective. Next, we develop these ideas in detail.

\section{Vector measures, integral RKBS and  Neural Networks}
\label{sec:vector-valued-RKBSs}

We introduce the following definition,
that readily generalizes vector-valued reproducing kernel Hilbert spaces \cite{MR2265340} to a Banach setting.
We refer to \cite{JMLR:v10:zhang09b,Lin2019OnRK} for an overview.

\begin{dfn}[Vector-valued RKBS]
 Let $\xx$ be a set and $\yy$ a Banach space.
 A \emph{reproducing kernel Banach space} (RKBS) $\hh$ on $\xx$ with
 values in $\yy$ is a Banach space such that
 \begin{enumerate}[label=\textnormal{(\roman*)}, itemsep=5pt]
 \item the elements of $\hh$ are  functions $ f : \xx \to \yy $;
\item the sum and the multiplication by scalars in $\hh$ are defined pointwise;
\item for all $ x \in \xx $ there is $ C_x > 0 $ such that $ \|f(x)\|_\yy \le C_x \|f\|_\hh $ for all $ f \in \hh $.
 \end{enumerate}
\end{dfn}
The first two conditions are equivalent to say that $\hh$ is a subspace of
$\yy^{\xx}$, the vector space of all functions from $\xx$ to $\yy$,  while
the third  condition requires that  for all $ x \in \xx $ the pointwise evaluation
$ f \mapsto f(x) $ is in $ B(\hh,\yy) $, the space of bounded linear operators between $\mathcal{H}$ and $\yy$.

Reproducing kernel Hilbert spaces can be characterized in terms of so-called feature maps.
In the following proposition we  provide an analogous result for
RKBS. In this case, feature spaces are  Banach spaces. 

\begin{prop} \label{prop:featureRKBS}
Let $\xx$ be a set, $\yy$ a Banach space and $\hh$ a set of functions
$ f : \xx \to \yy $.  Consider the following statements.
\begin{enumerate}[label=\textnormal{(\alph*)}, itemsep=5pt]
\item \label{it:rkbs}
The space $\hh$ is a RKBS.
\item \label{it:phi}
There is a Banach space $\ff$
 and a map $ \phi : \xx \to B(\ff,\yy) $ such that
 \vspace{5pt}
 \begin{enumerate}[label=\textnormal{(\roman*)}, itemsep=5pt]
  \item \label{it:B}
  $ \hh = \{ f_\mu : \mu \in \ff \} $
  where $ f_\mu = \phi(\cdot) \mu $ ;
  \item \label{it:normB}
  $ \| f \|_\hh = \inf \{ \| \mu \|_\ff : \mu \in \ff , f = f_\mu \} $ .
 \end{enumerate}
\item \label{it:psi}
There is a Banach space $\ff$
 and a map $ \psi : \xx \to B(\yy',\ff') $ such that
 \vspace{5pt}
 \begin{enumerate}[label=\textnormal{(\roman*)}, itemsep=5pt]
  \item \label{it:B'}
  $ \hh = \{ f_\mu : \mu \in \ff \} $
  where $ {}_{\yy''}\langle f_\mu(\cdot) , y' \rangle_{\yy'} = {}_\ff\langle \mu , \psi(\cdot) y' \rangle_{\ff'} $ for all $y'\in\yy'$;
  \item
  $ \| f \|_\hh = \inf \{ \| \mu \|_\ff : \mu \in \ff , f = f_\mu \} $ .
 \end{enumerate}
\end{enumerate}
Then \ref{it:rkbs} and \ref{it:phi} are equivalent and each one implies \ref{it:psi}.
Moreover, if $\yy$ is reflexive (in particular Hilbert), then \ref{it:rkbs}, \ref{it:phi} and \ref{it:psi} are all equivalent.
\end{prop}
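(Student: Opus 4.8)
The plan is to close the loop $\ref{it:phi}\Rightarrow\ref{it:rkbs}\Rightarrow\ref{it:phi}$, then establish $\ref{it:phi}\Rightarrow\ref{it:psi}$ (which, since $\ref{it:rkbs}\Leftrightarrow\ref{it:phi}$, also gives $\ref{it:rkbs}\Rightarrow\ref{it:psi}$), and finally $\ref{it:psi}\Rightarrow\ref{it:phi}$ under the extra hypothesis that $\yy$ is reflexive. For $\ref{it:phi}\Rightarrow\ref{it:rkbs}$ I would look at the linear map $T\colon\ff\to\yy^{\xx}$ given by $(T\mu)(x)=\phi(x)\mu$, whose range is $\hh$. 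Its kernel $\nn=\bigcap_{x\in\xx}\ker\phi(x)$ is a closed subspace of $\ff$, since each $\phi(x)$ is bounded, so $\ff/\nn$ is a Banach space; the induced map $[\mu]\mapsto f_\mu$ is a linear bijection $\ff/\nn\to\hh$, and the quotient norm of $[\mu]$ is exactly $\inf\{\|\nu\|_\ff:f_\nu=f_\mu\}$. Transporting this norm shows that $\hh$ with the norm in \ref{it:normB} is isometrically isomorphic to $\ff/\nn$, hence complete. Continuity of point evaluation is then immediate from $\|\phi(x)\mu\|_\yy\le\|\phi(x)\|\,\|\mu\|_\ff$: infimizing over $\mu$ with $f_\mu=f$ gives $\|f(x)\|_\yy\le\|\phi(x)\|\,\|f\|_\hh$, so one may take $C_x=\|\phi(x)\|_{B(\ff,\yy)}$.

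The converse $\ref{it:rkbs}\Rightarrow\ref{it:phi}$ is essentially tautological: put $\ff=\hh$ and let $\phi(x)\in B(\hh,\yy)$ be the evaluation map $\phi(x)f=f(x)$, which is bounded precisely by the RKBS property. Then $f_\mu=\phi(\cdot)\mu=\mu$, so $\hh=\{f_\mu:\mu\in\hh\}$, and the infimum in \ref{it:normB} is attained at $\mu=f$, returning $\|f\|_\hh$. For $\ref{it:phi}\Rightarrow\ref{it:psi}$ I would keep the same $\ff$ and take $\psi(x)=\phi(x)'\in B(\yy',\ff')$, the transpose (dual operator) of $\phi(x)$; then for all $\mu$ and $y'$ one has ${}_\yy\langle f_\mu(x),y'\rangle_{\yy'}={}_\yy\langle\phi(x)\mu,y'\rangle_{\yy'}={}_\ff\langle\mu,\psi(x)y'\rangle_{\ff'}$, so both the representation of $\hh$ and the norm formula transfer verbatim.

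It remains to prove $\ref{it:psi}\Rightarrow\ref{it:phi}$ when $\yy$ is reflexive. Given $\psi(x)\in B(\yy',\ff')$, consider its transpose $\psi(x)'\in B(\ff'',\yy'')$ and restrict it along the canonical embedding $\ff\hookrightarrow\ff''$ to obtain $\phi(x):=\psi(x)'|_\ff\in B(\ff,\yy'')$. Unwinding the dualities, $\phi(x)\mu$ is the functional $y'\mapsto{}_\ff\langle\mu,\psi(x)y'\rangle_{\ff'}$ on $\yy'$, which is exactly $f_\mu(x)$ read as an element of $\yy''$. Since $\yy$ is reflexive, $\yy''=\yy$, hence $\phi(x)\in B(\ff,\yy)$ and $f_\mu=\phi(\cdot)\mu$; this is $\ref{it:phi}$ with the same $\ff$, and the two infimum norms literally coincide, so we are back in the equivalent cases $\ref{it:rkbs}$/$\ref{it:phi}$.

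The crux is this last step: if $\yy$ is not reflexive, the operator $\phi(x)$ built from $\psi(x)$ only takes values in the bidual $\yy''$, so a priori there is no reason for the functions $f_\mu$ determined by $\ref{it:psi}$ to be $\yy$-valued at all — this is exactly why $\ref{it:psi}$ is weaker in general and why reflexivity is needed to reverse the implication. The only other points requiring (routine) care are the closedness of $\nn$, the identification of the quotient norm with the infimum norm in \ref{it:normB}, and keeping track of which side of the brackets ${}_\bullet\langle\cdot,\cdot\rangle_\bullet$ each argument sits on; none of these is a real obstacle once boundedness of $\phi(x)$ is in hand.
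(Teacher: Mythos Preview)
Your proof is correct and follows essentially the same route as the paper: the paper also takes $\ff=\hh$ with $\phi(x)f=f(x)$ for $\ref{it:rkbs}\Rightarrow\ref{it:phi}$, uses the quotient $\ff/\nn$ with $\nn=\bigcap_x\ker\phi(x)$ to get completeness in $\ref{it:phi}\Rightarrow\ref{it:rkbs}$, takes $\psi(x)$ to be the dual of $\phi(x)$ for $\ref{it:phi}\Rightarrow\ref{it:psi}$, and invokes reflexivity so that the bidual-valued $f_\mu$ is actually $\yy$-valued in the reverse direction. The only cosmetic difference is that the paper phrases the last step as $\ref{it:psi}\Rightarrow\ref{it:rkbs}$ (then reusing the $\ref{it:phi}\Rightarrow\ref{it:rkbs}$ argument) rather than your $\ref{it:psi}\Rightarrow\ref{it:phi}$ via $\phi(x)=\psi(x)'|_\ff$, but the content is identical.
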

\begin{proof}
 To see that \ref{it:rkbs} implies \ref{it:phi},
 take $ \ff = \hh $ and define
 \[
\phi : \xx \to B(\ff,\yy), \qquad \phi(x) f = f(x) .
   \]
 Then \ref{it:B} and \ref{it:normB} of item \ref{it:phi} are clear.
 Let us prove that \ref{it:phi} implies \ref{it:rkbs}.
 Clearly $\hh$ is a linear space and $\|\cdot\|_\hh$ is a norm.
 We then show that the normed space $\hh$ is complete.
The linear map $ \mu \mapsto f_\mu $ has kernel
$ \nn = \bigcap_{x\in\xx} \ker \phi(x) $.
Since $ \phi(x) $ is bounded for all $x\in\xx$, $ \ker \phi(x) $ is closed, hence so is $\nn$.
Thus, $ \ff / \nn $ is a Banach space \cite[Theorem 1.41]{rudin91} isomorphic to $\hh$ by construction,
which is therefore complete.
Next, we show that point evaluations in $\hh$ are continuous.
For $ f \in \hh $, let $ \mu \in \ff $ such that $ f = f_\mu $.
Then
$$
 \| f(x) \|_\yy = \| \phi(x)\mu \|_\yy \le \| \phi(x)\|_{B(\ff,\yy)} \|\mu\|_\ff  ,
$$
whence
\begin{equation*}
 \| f(x) \|_\yy \le \inf_{\mu\in\ff : f = f_\mu} \| \phi(x) \|_{B(\ff,\yy)} \| \mu \|_\ff
 = \| \phi(x) \|_{B(\ff,\yy)} \| f \|_\hh  .
\end{equation*}
The implication from \ref{it:phi} to \ref{it:psi} follows easily considering
 $ \psi(x)=\phi(x)^t\in\mathcal B(\yy',\ff') $ be the transpose map of $\phi(x)$.
Finally,
note that \ref{it:B'} in \ref{it:psi} defines
$f_\mu$ as a function from $\xx$ to $\yy''$.
Hence, if $\yy$ is reflexive,
it defines $ f_\mu : \xx \to \yy $.
From here,
following the proof of the implication
from \ref{it:phi} to \ref{it:rkbs},
one can prove that \ref{it:psi} implies \ref{it:rkbs}.
\end{proof}

\subsection*{Vector Radon measures}

In view of \Cref{prop:featureRKBS}, a 
RKBS can be constructed choosing a suitable feature space.
Thinking of a neural network layer as an atomic integration, we will define RKBS parameterized by measures. Since layers have vectorial outputs, we need the notion of vector valued measure \cite{diestel1977vector}.

Let $\Theta$ be a Hausdorff, locally compact, second countable topological space, and let $\yy$ be a Banach space. Recall that a (numerable) partition of a Borel set $A$ is a numerable family of Borel sets $\{A_i\}$ such that $ A_i \cap A_j=\emptyset $ for all $ i \ne j $ and $ \bigcup_i A_i = A $. 
\begin{dfn}[Vector measure]
A \emph{vector  measure} on $\Theta$ with values in $\yy$ is a set function $\mu\colon\mathcal{B}(\Theta)\to\yy$ such that 
for all $A\in\mathcal{B}(\Theta)$ and all $\{A_i\}$ partitions of $A$,
$$
 \mu(A)=\sum_i\mu(A_i)  ,
$$
where the sum converges unconditionally in the $ \| \cdot \|_\yy $-norm.
\end{dfn}
Pettis theorem \cite[Thm.1 IV.10.1]{dunford1963linear} shows that a set function $\mu\colon\mathcal{B}(\Theta)\to\yy$ is a vector measure if and only if $\mu_{y'}=\scal{\mu(\cdot)}{y'}{\yy}{\yy'}$ is a $\sigma$-additive measure for all $y'\in\yy'$.
 \begin{dfn}[Variation of a vector measure]
 Let $\mu$ be a vector measure on $\Theta$ with values in $\yy$. 
 The variation of $\mu$ is the function $|\mu|\colon\mathcal{B}(\Theta)\to [0,+\infty]$ defined by 
 \[
 |\mu|(A)=\sup_{\{A_i\}}\sum_i\|\mu(A_i)\|_{\yy} \qquad A\in\mathcal{B}(\Theta) ,
 \]
 where the supremum is taken over all finite partitions of $A$.
If $|\mu|(\Theta)<+\infty$, the measure $\mu$ is called a vector measure of bounded variation.
\end{dfn}
The space $\mathcal{M}(\Theta,\yy)$ of vector measures of bounded variation is a Banach space with respect to the norm 
\[
\|\mu\|_{\rm TV}=|\mu|(\Theta)  .
\]
If $\mu\in\mathcal{M}(\Theta,\yy)$, its variation $|\mu|$ is a finite positive measure on $\Theta$, see \cite[1.A.10]{dinculeanu2000vector}.

The integration of a scalar function $\varphi$ with respect a  vector measure of bounded variation can be defined as the Bochner integral of a vector valued function, see \cite[ Ch.1 Section D]{dinculeanu2000vector} and \cite[IV.10]{dunford1963linear}. In particular, a measurable scalar function $\varphi$ is integrable with respect to $\mu$ if and only if $\varphi$ is integrable with respect to $|\mu|$. 
If $\varphi=\sum_i t_i {\chi}_{A_i}$ is a simple function, then 
\[
\int_{\Theta} \varphi(\theta) d\mu(\theta)= \sum_i t_i \mu(A_i)\in \yy,
\]
and the integral of an arbitrary $|\mu|$-integrable functions is defined via the density of simple functions.

If $\yy$ has the Radon-Nikodym property \cite[Ch. 1.G]{dinculeanu2000vector}, 
as it happens if $\yy$ is reflexive, then $\mu$  has density  with respect to $|\mu|$, {\it i.e.} there exists a  function $g\colon\Theta\to\yy$ such that $\nor{g(\theta)}{\yy}=1$ for all $\theta\in\Theta$, $g$ is Bocnher integrable with respect to $|\mu|$, and
\[
\int_{\Theta} \varphi(\theta) d\mu(\theta)= \int_{\Theta} \varphi(\theta) g(\theta) d|\mu|(\theta),
\]
for all  $\mu$-integrable scalar functions $\varphi:\Theta\to\R$. If $\yy$  does not have the Radon-Nikodym property, 
nevertheless there always exists a function $g:\Theta\to \yy$ such that
\begin{enumerate}[label=\roman*)]
    \item for all $y'\in\yy'$, $\scal{g(\cdot)}{y'}{\yy}{\yy'}$ is $|\mu|$-integrable; 
    \item for all $\theta\in \Theta$, $\nor{g(\theta)}{\yy}=1$;
    \item for all scalar function $\varphi$ that are $\mu$-integrable and $y'\in\yy'$
    \[
\scal{\int_{\Theta} \varphi(\theta) d\mu(\theta)}{y'}{\yy}{\yy'}= \int \varphi(\theta) \scal{g(\theta)  }{y'}{\yy}{\yy'}d|\mu|(\theta),
    \]
\end{enumerate}
see \cite[ Ch.1 Theorem 34]{dinculeanu2000vector}.

\subsection*{Vector integral RKBS and neural networks}
We now introduce particular classes of RKBS.
In particular,  we extend the infinite-width limit \eqref{eq:infinite_width_nn-rho} of shallow neural networks from scalar to vector-valued functions.
This result is of independent interest and will be crucial for the extension from shallow to deep networks. 

\begin{dfn}[Integral RKBS] \label{def:int_rkbs}
Let $ \Theta $ be a locally compact, second countable topological space,
$\xx$ a set, and $\yy$ a Banach space.
Let
$$
\rho : \xx \times \Theta \to \R
$$
be such that
$ \rho(x,\cdot) \in \cc_0(\Theta) $ for all $ x \in \xx $.
For $ \mu \in \mm(\Theta , \yy) $, let
\begin{equation}\label{eq:integral}
    f_\mu : \xx \to \yy, \qquad f_\mu(x)
 = \phi(x)\mu
 = \int_{\Theta} \rho(x,\theta) \D\mu(\theta),
\end{equation}
and
$$
 \hh = \{ f_\mu : \xx \to \yy : \mu \in \mm(\Th , \yy)  \},
$$
with
\begin{equation}\label{eq:norma}
    \| f \|_\hh = \inf_{\mu \in \mm(\Theta , \yy)} \{ \| \mu \|_{\TV} : f = f_\mu \} .
\end{equation}
The space $ \hh $ thus defined is a (vector-valued) RKBS,
which we call an \emph{integral RKBS}.
Moreover, we call $\Theta$ the \emph{parameter spaces} of $\hh$,
and $ \rho $ its \emph{basis function}.
\end{dfn}
\begin{rmk}
 If $ \rho(x,\cdot)$ is bounded for all $ x \in \xx $,~\eqref{eq:integral} is still well defined. The stronger assumption that $\rho(x,\cdot) \in \cc_0(\Theta) $ ensures that the map $f\mapsto f(x)$ is weakly$^*$ continuous, see \Cref{riesz}.
 \end{rmk}

\begin{rmk}\label{rmk:our_shallow}
    Scalar valued integral RKBS correspond to the choice $\yy=\mathbb{R}$
    and coincide with the setting considered in \cite{BARTOLUCCI2023194}.
\end{rmk}

\section{Deep Integral and Neural RKBS} \label{sec:neuralRKBS}

In this section, we start introducing \emph{deep} integral RKBS. Then, we derive representer theorems on these spaces, which show how optimal solutions minimizing empirical objectives can be taken to have a finite width at every layer.

\subsection*{Deep RKBS}
The first step is to take direct sums of RKBS and define the deep
  RKBS as a composition of elements in the direct sum.
\begin{dfn}[Deep RKBS] \label{def:deep_rkbs}
Let $\xx$ be a set and $\yy$ a Banach space.  Fix a positive integer
$L\geq 1$. 
Take a set $\xx_0=\xx$
and Banach spaces $ \xx_1 , \dots , \xx_{L+1}=\yy $.
For $ \ell = 0 , \dots , L $,
take RKBS $\hh_\ell$ on $\xx_\ell$ with values in $ \xx_{\ell+1} $.
The direct sum
$$
 \hh = \hh_0 \oplus \cdots \oplus \hh_L
$$
is a Banach space with respect to the norm
$$
 \| f \|_\hh = \| f_0 \|_{\hh_0} + \cdots + \| f_L \|_{\hh_L}, \qquad f
 = f_0 \oplus \cdots \oplus f_L . $$ 
To every $ f =f_0 \oplus \cdots \oplus f_L\in \hh $
we assign the function $ f^{\rm deep} : \xx \to \yy $ defined by
$$
 f^{\rm deep} = f_L \circ \cdots \circ f_0,
$$
and we set
$$
 \hh^{\rm deep} = \{ f^{\rm deep} : \xx \to \yy : f \in \hh \} \ 
$$
endowed  with the complexity measure $\Phi: \hh^{\rm deep}\to [0,+\infty)$ given by
  \begin{equation}
\Phi(f^{\rm deep})=\inf\{ \| g \|_\hh : g\in\hh\ \text{such that } g^{\rm deep} = f^{\rm deep} \}.\label{eq:4}
\end{equation}

With a slight abuse of language, we call the nonlinear space $
  \hh^{\rm deep}$  an \emph{RKBS of depth $L$} induced by the Banach space
  $\hh$. If $L>1$, we refer to
  $\hh^{\rm deep}$  as a \emph{deep 
    RKBS}. 
Moreover, we call $ \xx_\ell $ the \emph{layer spaces} of $\hh$.
In particular, $ \xx = \xx_0 $ is the \emph{input space},
$ \yy = \xx_{L+1} $ is the \emph{output space},
and $ \xx_\ell $, $ \ell = 1 , \dots, L $, are the \emph{hidden layer spaces}.
\end{dfn}

\subsection*{Deep integral RKBS}
Next, we combine \Cref{def:int_rkbs,def:deep_rkbs}.
\begin{dfn}[Deep integral RKBS]\label{dfn:deep-integral-rkbs}
Let $\hh^{\rm deep}$ be an RKBS of depth $L$.
If the RKBS $\hh_\ell$ are integral for all $\ell=0,\dots,L$,
$$
 \hh_\ell = \{ f_{\mu_\ell} : \xx_\ell \to \xx_{\ell+1} : \mu_\ell \in \mm(\Theta_\ell,\xx_{\ell+1}) \} ,
$$
with basis functions
$$
 \rho_\ell : \xx_\ell \times \Theta_\ell \to \R  ,
$$
where $ \rho_\ell(x,\cdot) \in \cc_0(\Theta_\ell) $ for all $ x \in \xx $. We call $\hh^{\rm deep}$
an \emph{integral RKBS of depth $L$} and, 
if $L>1$, a \emph{deep integral RKBS}.
\end{dfn}

Thus, a function $ f^{\rm deep} : \xx \to \yy $ in a deep integral RKBS $\hh^{\rm deep}$ of depth $L$ has the form
\begin{equation}\label{eq:deep_integral}
 \begin{cases}
  x^{(0)} = x & \in \xx \\[5pt]
  x^{(\ell+1)} = \displaystyle{\int_{\Theta_\ell} \rho_\ell(x^{(\ell)},\theta_\ell) \D\mu_\ell(\theta_\ell)}
  & \in \xx_{\ell+1}, \qquad \ell = 0 , \dots , L, \\
   f^{\rm deep}(x) = x^{(L+1)}  & \in \yy\
 \end{cases}
\end{equation}
where $\mu_\ell \in \mm(\Th_\ell , \xx^{\ell+1})$ for any
$\ell=0,\ldots,L$.

We now define (deep) integral RKBS modeled on neural networks.
In such spaces, the basis functions are defined in terms of an activation function.

\begin{dfn}[Neural RKBS]\label{dfn:neural-rkbs}
 Let $\hh^{\rm deep}$ be an integral RKBS of depth $L$.
 Suppose that, for each $\ell$, the layer $ \xx_\ell  $ is a function space over the parameter space $ \Th_\ell $, that is, $ \xx_\ell \subset \R^{\Th_\ell} $.
Let $ \sigma : \R \to \R $ be a (nonlinear) activation function
and $ c_\ell : \Th_\ell \to \R $.
Suppose that the basis functions $\rho_\ell$ are of the form
 \begin{align*}
  & \rho_0(x_0,\th_0) = x_0(\th_0) \\
  & \rho_\ell ( x_\ell , \th_\ell ) = \sigma ( x_\ell(\th_\ell) + c_\ell(\th_\ell) ) \be_\ell(\th_\ell),
  \qquad \ell = 1, \dots , L ,
 \end{align*}
 where $ \be_\ell : \Th_\ell \to \R $ is such that $ \rho_\ell(x_\ell,\cdot) \in \cc_0(\Th_\ell) $ for all $ x_\ell \in \xx_\ell $.
Then we call $\hh^{\rm deep}$ a \emph{neural RKBS} of depth $L$ and, if $L>1$, a \emph{deep} neural RKBS.
\end{dfn}

\begin{rmk}[Infinite-width shallow neural networks revisited]\label{bartolucci}
    In Remark~\ref{rmk:our_shallow}, we observed that function spaces considered in \cite{BARTOLUCCI2023194} correspond to the choice $\yy=\R$ in Definition~\ref{def:int_rkbs}. Here, we show that those spaces can also be recovered from Definition~\ref{dfn:neural-rkbs} by taking $L=1$, parameter and layer spaces
    \begin{alignat*}{2}
      &  \Theta_0 = \{ 0 , \dots, d \}, \qquad && \xx_0 = \R^d, \\
 & \Theta_1 = \R^{d+1}, \qquad &&
 \xx_1 = \R^{d+1},\\
 & \qquad && \xx_2 = \R \ ,
    \end{alignat*}
and basis functions of the form
\begin{align*}
 & \rho_0 (x , j) =
 \begin{cases}
   1  & \hspace{29pt} j=0 \\
   x_j  & \hspace{29pt} j=1,\ldots,d
 \end{cases}\quad,
\quad x\in \R^d,  \end{align*}
\begin{align*}
& \rho_1 (x , \theta) =
   \sigma(\scal{x}{\theta}{}{\R^{d+1}})\beta(\theta),
 \quad \rho_1(x,\cdot)\in\cc_0(\R^d\times\R) \ .
\end{align*}
Indeed,  let $\mu_0\in
  \mm(\Theta_0,\xx_1)=\bigoplus_{k=0}^d \R^{d+1}$. Then 
  $\mu_0=\sum_{k=o}^d  v_k \delta_k$ where $v_0,\ldots,
  v_d\in\R^{d+1} $, so that,  for all $x=(x_1,\ldots,x_d)\in\R^d$,
\[
x^{(1)}=v_0+\sum_{k=1}^d v_k x_k .
\]
Moreover, let $\mu_1\in \mm(\Theta_1,\xx_2)= \mm(\R^{d+1})$. Then 
$$
    f(x) = \int_{\R^{d+1}} \sigma(\sum_{k=1}^d
    \scal{v_k}{\theta}{}{\R^{d+1}}x_k +\scal{v_0}{\theta}{}{\R^{d+1}}
    ) \beta(\theta) d\mu_1(\theta) .
$$
Now, assuming that $v_0,\ldots,v_d$ are linearly independent,
let $ \Lambda:\R^{d+1}\to \R^{d+1} $ be the linear transformation such that ${\Lambda^\top} e_k= v_k$,
when $(e_k)_{k=0,\dots,d}$ is the canonical base of $\R^{d+1}$.
Then, setting $\theta=(w,b)\in\R^d\times\R$, $\beta'(\theta)=\beta(\Lambda^{-1}\theta)$,
and $\mu$ be the pushforward measure of $\mu_1$ by $\Lambda$, we obtain
$$
f(x) = \int_{\R\times\R^{d}}\sigma( \scal{w}{x}{}{\R^d} + b )
 \beta'(w,b)d\mu(w,b),
$$
These result in shallow neural networks with an uncountable number of hidden neurons.
In Section~\ref{sec:discrete-neural-rkbs}, inspired by this construction, we will introduce  deep networks where each hidden layer has countably many neurons.
\end{rmk}

\section{Representer Theorems for Deep Neural Networks}\label{sec:representer-theorems}

In this section, we state and prove representer theorems for deep integral and neural RKBS,
showing that common used deep neural networks can be seen as solutions of a variational problem.
We start by briefly recalling the basic supervised learning setting.

Let $\xx$ and $\yy$ be sets, called input and output space, respectively.
Consider a class $\hh$ of functions $ f : \xx \to \yy $, called \emph{hypothesis space},
a \emph{loss function} $ \mathcal{L} : \yy \times \yy \to [0,\infty) $,
and a \emph{penalty} $ \Phi : \hh \to [0,\infty) $.
Given $N$ samples
$$
 (x_i,y_i) \in \xx \times \yy , \qquad i = 1 , \dots , N  ,
$$
consider the \emph{regularized empirical risk minimization problem}
\begin{equation} \label{eq:erm}
 \min_{f \in \hh} \rr(f) + \Phi(f).
\end{equation}
Here,
$$
 \rr(f) =  \frac{1}{N} \sum_{i=1}^N \mathcal{L}( f(x_i) , y_i )
$$
is the empirical error associated to the loss function $\mathcal{L}$ and the training points $(x_i,y_i) \in \xx \times \yy, i = 1 , \dots , N $.
Representer theorems characterize the solutions of problems such as~\eqref{eq:erm}.

From now on, we fix a deep integral RKBS $\hh^{\rm deep}$ of depth $L$ with basis functions 
$
 \rho_\ell : \xx_\ell \times \Theta_\ell \to \R$,  as in Definition~\ref{dfn:deep-integral-rkbs}. 
To prove a representer theorem for  $\hh^{\rm deep}$
we will need the following fundamental assumption on the basis
functions $\rho_\ell$.

\begin{asp}\label{ass-ern}
The output space $\yy$ is a Hilbert space and, for each $\ell=1,\ldots,L$, the hidden layer $\xx_\ell$ is a separable 
reproducing kernel Hilbert space on the corresponding parameter space
$\Theta_\ell$ with a continuous reproducing kernel.
Furthermore, the
basis functions are of the form 
\begin{equation}\label{asp:x(th)}
 \rho_\ell(x,\th) = \widetilde{\rho}_\ell(x(\th),\th), \qquad x \in \xx_\ell, \quad \th \in \Th_\ell \ ,
\end{equation}
where $ \widetilde{\rho}_\ell : \R \times \Th_\ell \to \R $ is a
continuos function such that
$\rho_\ell(0,\cdot)\in\cc_0(\Theta_\ell)$, and 
\begin{equation}\label{asp:lip}
  | \rho_\ell(x,\th) - \rho_\ell(x',\th) | \le C_\ell | \langle x - x'
  , g_\ell(\th) \rangle_{\xx_\ell} | |\be_\ell(\th)| , \qquad x , x' \in \xx_\ell, \quad \th \in \Th_\ell \ ,
 \end{equation}
where $ C_\ell > 0 $, $ g_\ell \in \cc_b(\Th_\ell,\xx_\ell) $ and $
\beta_\ell \in \cc_0(\Th_\ell) $.
\end{asp}
The proof of the representer theorem for deep integral RKBS (\Cref{thm:representer}) relies on applying~\Cref{thm:bredies-carioni} to each hidden layer. \Cref{thm:bredies-carioni} requires a finite-dimensional setting. For the output layer, this condition is naturally met by assuming the output space is $\R^p$. However, since the hidden layers are infinite-dimensional, we must assume that each space $\xx_\ell$ is a reproducing kernel Hilbert space to reduce the problem to a finite-dimensional one, see~\eqref{xtilde}.
Assumption~\eqref{asp:lip} is essential to ensure that the evaluation functional $f_1(f_2(x))$ at $x\in\xx_{\ell-1}$ is jointly continuous in the pair $(f_1,f_2)\in \xx_{\ell}\times \xx_{\ell+1}$, as shown in~\Cref{lem:joint_weak*_cont}. This joint continuity, in turn, is crucial for establishing the existence of a minimizer of the regularized empirical risk minimization problem~\eqref{eq:erm}. We will see in Remark~\ref{rmk:assumption-satisfied} and Remark~\ref{rmk:ass_discrete_neural_rkbs} that such an assumption is easily satisfied by neural and discrete neural RKBS.

  \begin{rmk}\label{rmk:czero}
  In view of \Cref{ass-ern},
since $\xx_\ell$ is a reproducing kernel Hilbert
    space with a continuous reproducing kernel, $x(\cdot)$ is a
    continuous function, and hence $\rho_\ell$ is continuous on $\Theta_\ell$. Furthermore,
    given $x\in \xx_\ell$, for all $\th\in\Th_\ell$ we have
    \begin{alignat*}{1}
      |\rho_\ell(x,\theta)| &\leq |\rho_\ell(0,\theta)| 
      +|\rho_\ell(x,\theta) -\rho_\ell(0,\theta)| \\
& \leq C_\ell  |\scal{x}{g_\ell(\theta)}{}{\xx_{\ell}}|
      |\beta_\ell(\theta)| \\ & \leq C_\ell  \nor{x}{\xx_\ell}
      \sup_{\theta'\in\Theta_\ell}\nor{g_\ell(\theta')}{\xx_\ell}|\beta_\ell(\theta)|
    \end{alignat*}
    so that $\rho_\ell(x,\cdot)\in \cc_0(\Theta_\ell)$.
    Thus, the integral RKBS $\hh_\ell$, and therefore the associated deep integral RKBS, are well defined.
  \end{rmk}
  \begin{rmk}\label{riesz}
  Recall that, for each $\ell=1,\ldots, L$, the space $\Theta_\ell$ is second countable.
  Thus, for any $\mu\in\mathcal M(\Theta_\ell,\xx_{\ell+1})$, its variation $|\mu|$ is a finite measure, so $|\mu|$ is regular, and therefore $\mu$ is regular too \cite[Proposition 1]{dinculeanu1959representation}. Hence, taking into account that $\xx_{\ell+1}$ is a Hilbert space, so that $\xx_{\ell+1}'=\xx_{\ell+1}$, by a generalization of the Riesz representation theorem the Banach space $\mm(\Theta,\xx_\ell)$ can be identified with the dual of $\cc_0(\Theta_\ell,\xx_{\ell+1})$, see \cite{singer1957linear,ryan} for compact spaces,  
and \cite{carmeli2010vector} for second countable locally compact spaces.
It follows that $\mathcal M(\Theta_\ell,\xx_{\ell+1})$  can be endowed with the weak$^*$ topology, with respect to which the closed balls are compact.
\end{rmk}

We can now derive the representer theorem for deep integral RKBS.

\begin{thm}[Representer theorem for deep integral RKBS] \label{thm:representer}
Let $ \hh^{\rm deep}$  be a deep
integral RKBS  of depth $L$, induced by a Banach space $\mathcal{H}$, from the input space $\xx$
to the output space $\yy=\R^p$ satisfying~\Cref{ass-ern}. Assume that the loss
function $ \mathcal{L} $ is continuous in the first entry.
Then, there exist $ d_1 , \dots, d_{L+1} \in \N$ and, for all $\ell = 0 , \dots, L$,
\begin{alignat*}{1}
 & \theta^{(\ell)}_1,\ldots, \theta^{(\ell)}_{d_\ell}\in \Theta_\ell ,\\
 &   w^{(\ell+1)}_1, \ldots ,  w^{(\ell+1)}_{d_{\ell} } \in
    \widetilde{\xx}_{\ell+1}\subset
    \xx_{\ell+1}\quad\text{with}\quad \operatorname{dim}(\widetilde{\xx}_{\ell+1})\leq
    d_{\ell+1},
\end{alignat*}
such that
\[
f^{{\rm deep}}(x) = x^{(L+1)} \in \R^{p}, \qquad x\in\xx,
\]
with $x^{(L+1)}$ given by the recursive formula
\begin{equation}
\label{eq:6}
  \begin{cases}
  x^{(0)} = x \in \xx\\[5pt]
  x^{(\ell+1)} = \sum_{k=1}^{d_\ell} w^{(\ell+1)}_ k  \rho_\ell(x^{(\ell)},\theta^{(\ell)}_k) \in \widetilde{\xx}_{\ell+1},
  \qquad \ell = 0 , \dots , L,
 \end{cases}
\end{equation}
is a solution of the minimization problem
\begin{equation}\label{eq:initial_erm}
\min_{f^{{\rm deep}} \in \hh ^{{\rm deep}}} \rr(f^{{\rm deep}}) + \Phi(f^{{\rm deep}})  .
\end{equation}
Moreover, we have $ d_\ell \le N d_{\ell+1} $ for every $ \ell = 1 . \dots , L $, and
\begin{equation*}
\Phi(f^{{\rm deep}}) \le \sum_{\ell=0}^L \sum_{k=1}^{d_\ell} \| w^{(\ell+1)}_{ k} \|_{\xx_{\ell+1}}.
\end{equation*}
\end{thm}

\begin{proof}
By definition of $\hh^{\rm deep}$ and the complexity measure $\Phi$, the minimization problem is equivalent to 
\[
\min_{f \in \hh } \rr(f^{\rm deep}) + \nor{f}{\hh} ,
\]
where, for each $f=\oplus_{\ell=0}^L f_\ell\in\hh$, the function
$f^{\rm deep}:\xx_0\to \xx_{L+1}$ is the composition of
$f_{0},\ldots, f_{L}$ ($\xx_0=\xx$ and $\xx_{L+1}=\R^{p}$).
By construction, each $f_\ell$ is parameterized by some measure $\mu_\ell\in\mm_\ell = \mm(\Th_\ell,\xx_{\ell+1}) $, according to~\eqref{eq:integral}.
Moreover, by \eqref{eq:norma}, the minimization problem \eqref{eq:erm} is equivalent to 
\begin{equation} \label{eq:erm-mu}
\inf_{\mu \in \mm} \rr(f_\mu^{\rm deep}) + \sum_{\ell=0}^L \| \mu_\ell \|_{\TV} =: \inf_{\mu \in \mm}\cS(\mu) ,
\end{equation}
where $\mm$ is the Banach space $\bigoplus_{\ell=0}^L
  \mm(\Th_\ell,\xx_{\ell+1}) $ and, if $\mu=\mu_0\oplus\cdots\oplus\mu_L\in\mm$, $f_\mu^{\rm deep}$ is the composition of $f_{\mu_0},\ldots, f_{\mu_L}$.

 Fix a $ \nu \in \mm $, and let $ R = \cS(\nu)$. Then  \eqref{eq:erm-mu} is equivalent to 
\begin{equation} \label{eq:erm-R}
 \inf_{\mu \in \prod_{\ell=0}^L B_{\mm_\ell}(R)} \cS(\mu)  .
\end{equation}
Indeed, if $\mu$ is outside $ \prod_{\ell=0}^L B_{\mm_\ell}(R) $, then for some $\ell=0,
\ldots,L$, $\nor{\mu_\ell}{\text{TV}}>R $, so that 
$$
 \cS(\mu) = \rr(f_\mu) + \sum_{\ell=0}^L \| \mu_\ell \|_{\TV}
 \ge \rr(f_\mu) + \cS(\nu)
 \ge \cS(\nu),
$$
which proves the equivalence. We now prove the existence of a
minimizer of~\eqref{eq:erm-R}.

{\it Existence of a minimizer.}
Assumption~\ref{ass-ern} along with \Cref{rmk:czero} ensures that, for
  any $\ell=0,\ldots,L$, the
  pair $(\rho_{\ell},\rho_{\ell+1})$ of basis functions  satisfies the
  conditions of~\Cref{lem:joint_weak*_cont}.
  Hence, taking into
  account~\Cref{rmk:evaluation}, it follows that , for all $ x \in \xx $, the map
$$
(\mu_0,\dots,\mu_L) \mapsto f_{\mu_L} \circ \cdots \circ f_{\mu_0} (x)
$$
is \emph{jointly} continuous from $ \prod_{\ell=0}^L B_{\mm_\ell}(R)
$, endowed with the product topology induced by the weak$^*$ topology
of each $B_{\mm_\ell}(R)$,  to $\yy$. Since $\mu_\ell \mapsto \nor{\mu}{\TV}$ is weakly$^*$ continuous, 
the map $ \mu \mapsto \cS(\mu) $ is also continuous.
Moreover, thanks to the Banach--Alaoglu theorem,
the product $ \prod_{\ell=0}^L B_{\mm_\ell}(R) $ is weakly$^*$ compact.
Hence, by the extreme value theorem, the problem \eqref{eq:erm-R} has at least a minimizer. Now we prove the representer theorem showing that a minimizer can be taken to have a finite width at every layer. Figure~\ref{fig:representer_thm} summarizes the proof structure of the representer theorem. 

{\it Representer theorem.} Let $ \mu^* $ be any such solution.
Let $ x_i^{(0)} = x_i $ and $ x_i^{(\ell+1)} = f_{\mu^*_\ell} (x_i^{(\ell)}) $ for $ \ell = 0 , \dots , L $ and $i=1,\ldots,N$.
Then, in view of \Cref{lem:min-interpol},
a solution to \eqref{eq:erm-R} can be found by solving the following interpolation problems
for all $ \ell = 0 , \dots , L $:
\begin{equation} \label{eq:interpol}
 \inf_{\mu_\ell \in \mm_\ell} \| \mu_\ell \|_{\TV} \quad \text{subject to} \quad f_{\mu_\ell} (x_i^{(\ell)}) = x_i^{(\ell+1)} \quad i = 1 , \dots , N .
\end{equation}

{\it Case $ \ell = L $.} Let us start from $ \ell = L $.
We want to apply \Cref{thm:bredies-carioni}.
 To this end, let $ \mathcal{U} = \mm_L $ endowed with the weak$^*$ topology.
 We define $ \aa : \mathcal{U} \to \R^{N \times d_{L+1}} $ by
 $$
  \aa \mu = [ f_{\mu_{L}}(x_i^{(L)}) ]_{i=1,\dots,N} .
 $$
 Then $\aa$ is a surjective continuous linear operator from $\mathcal{U}$ onto $ H = \ran\aa$,
 with $ \dim(H) \le Nd_{L+1} $.
 Moreover, the norm $ G = \|\cdot\|_{\TV} $ is coercive on $\mathcal{U}$.
 Indeed, by the Banach--Alaoglu theorem, the balls $ B_{\mm_L}(r) $ are weakly$^*$ compact for every $ r > 0 $.
 We define $ F : H \to [0,\infty] $ by
 $$
  F(h) = \begin{cases}
   0 & h_i = x_i^{(L+1)} \text{ for all $i=1,\dots,N$} \\
   \infty & h_i \ne x_i^{(L+1)} \text{ for some $i=1,\dots,N$}  .
  \end{cases}
 $$
The function $F$ is the indicator function associated to the singleton $\{ (x_1^{(L+1)},\ldots,x_N^{(L+1)})\}$, so that it is convex, coercive and lower semi-continuous.
 Therefore, we can apply \Cref{thm:bredies-carioni} to derive that \eqref{eq:interpol} has a solution of the form
 $$
  \widetilde{\mu}_L = \sum_{k=1}^{d_L} c_k u_k,
 $$
 for some $ d_L \le N d_{L+1} $, $ c_k > 0 $ and $ u_k \in \Ext(B_{\mm_L}(1)) $.
By \Cref{lem:Ext(B)}, for each $k$ there are $ y_k \in \Ext(B_{\xx_{L+1}}(1)) $ and $ \theta^{(L)}_k \in \Th_L $ such that
 $$
  u_k = y_k \cdot \delta_{\theta^{(L)}_k}  .
 $$
 Thus, defining $ w^{(L+1)}_{\cdot k} = c_k y_k \in {\xx_{L+1}}$, we get
 $$
  \widetilde{\mu}_L = \sum_{k=1}^{d_L} w^{(L+1)}_{ k} \delta_{\theta^{(L)}_k}  .
 $$
 Thanks to \Cref{ass-ern}, for all $x\in\xx_L$ we have
\begin{alignat}{1}
 x^{(L+1)}= f _{\widetilde{\mu}_L} (x) & = \sum_{k=1}^{d_L} w^{(L+1)}_{ k}
\widetilde{\rho}_L(x(\theta_k^{(L)}),\th_k^{(L)}).
\label{eq:7}
\end{alignat}
Thus, \eqref{eq:6} holds true for $\ell=L$ with
$\widetilde{\xx}_{L+1}=\xx_{L+1}=\R^{p}$ and $d_{L+1}=p$. 

{\it Case $ \ell = L-1$.} Now we consider $ \ell = L-1 $.  Let  $K^{(L)}$ be the reproducing kernel of $\xx_L$ and set 
\begin{equation}\label{xtilde}
\widetilde{\xx}_L=\operatorname{span}\{
  K^{(L)}(\cdot,\theta^{(L)}_k):k=1,\ldots,d_L\}\subset\xx_{L}, \qquad
  \dim{\widetilde{\xx}_L}\leq d_L,    
\end{equation}
and $P_L$ the corresponding orthogonal projection from $\xx_L$ onto
$\widetilde{\xx}_L$.  By~\eqref{eq:7},  it is clear that 
$f _{\widetilde{\mu}_L}(x)=f _{\widetilde{\mu}_L}(P_Lx)$ for all
$x\in\xx_L$. Hence, since $\nor{P_L\mu}{\TV}\leq  \nor{\mu}{\TV}$ for
all $\mu\in \mm(\Th_{L-1},\xx_L)$, the direct sum of measures
\[
(\mu^*)'=\mu^*_0 \oplus\ldots\oplus \mu^*_{L-2}\oplus P_L\mu^*_{L-1}\oplus
\widetilde{\mu}_L  
\]
is a minimizer of~\eqref{eq:erm-R}. By \Cref{rmk:freedom}, for each $i=1,\ldots,N$,  we can
replace  $x^{(L)}_i=f_{\mu^*_{L-1}}(x_i^{(L-1)})$
with $P_L x_i^{(L)}=f_{P_L\mu^*_{L-1}}(x_i^{(L-1)})$, so that  \eqref{eq:interpol}
for $\ell=L-1$ reads as
$$
   \inf_{\mu_{L-1} \in \mm(\Th_{L-1},\widetilde{\xx}_L)} \| \mu_{L-1} \|_{\TV} \quad \text{subject to} \quad f_{\mu_{L-1}} (x_i^{(L-1)}) = {P_Lx_i^{(L)}} \quad i = 1 , \dots , N .
 $$ 
Since $\widetilde{\xx}_L$ is finite dimensional,
we can use again \Cref{thm:bredies-carioni}, together with \Cref{lem:Ext(B)}, as in the previous step ($\ell=L$).
In particular, this time we have $ \dim(H) \le N d_L $.
Thus, we find a solution of the form
 $$
  \widetilde{\mu}_{L-1} = \sum_{k=1}^{d_{L-1}} w^{(L)}_{ k} \delta_{\theta^{(L-1)}_k},
 $$
 for some $ d_{L-1} \le N d_L $ and $w^{(L)}_{1}, \ldots,
   w^{(L)}_{d_{L-1}}\in\widetilde{\xx}_{L}$.  Now, for all $x\in\xx_{L-1}$ we have
\[
f _{\widetilde{\mu}_{L-1}} (x)=\sum_{k=1}^{d_{L-1}}
w^{(L)}_{k}  \widetilde{\rho}_{L-1}(x(\theta_k^{(L-1)}),\th_k^{(L-1)}) .
\]
 Iterating the argument,~\eqref{eq:6} holds true for $ \ell
 =1,\ldots,L$.  
 
 {\it Case $\ell=0$.} For the last step ($\ell=0$),  the minimization problem 
 is
\[
\inf_{\mu_0 \in \mm_0} \| \mu_0\|_{\rm TV} \quad \text{subject to} \quad f_{\mu_0} (x_i) = P_1x_i^{(1)}\in\widetilde{\xx}_1 \quad i = 1 , \dots , N  ,
\]
which, as above, admits a solution $\widetilde{\mu}_0$ such that
\[
f _{\widetilde{\mu}_0} (x)  = \sum_{k=1}^{d_0} w^{(1)}_{ k}
 \rho_0(x,\th_k^{(0)})  .
\]
Then, $\widetilde{\mu}=\widetilde{\mu}_0 \oplus\ldots\oplus \widetilde{\mu}_{L-1}\oplus
\widetilde{\mu}_L$ is a solution of \eqref{eq:erm-R} and, consequently, $f_{\widetilde{\mu}}^{\rm deep}=f_{\widetilde{\mu}_0}\circ\cdots\circ f_{\widetilde{\mu}_L}$ is a solution of the initial minimization problem \eqref{eq:initial_erm}.
The bound on $d_\ell$ is also clear by iteration.
 For the bound on $ \Phi(f^{{\rm deep}}) $, by definition we have
 $$
 \Phi(f^{{\rm deep}})\leq \sum_{\ell=0}^L \| f_{\widetilde{\mu}_\ell} \|_{\hh_\ell}
  \le \sum_{\ell=0}^L \| \widetilde{\mu}_\ell \|_{\rm TV} ,
 $$
 where, once again by \Cref{thm:bredies-carioni},
 $$
  \| \widetilde{\mu}_\ell \|_{\TV} = \sum_{k=1}^{d_\ell} c_k  .
 $$
 But since $ y_k \in \Ext( B_{\widetilde{\xx}_{\ell+1}}(1) ) $,
 we have $ \| y_k \|_{\widetilde{\xx}_{\ell+1}} = 1 $, whence, identifying again $\widetilde{\xx}_{\ell+1}$ with a subspace of $\xx_{\ell+1}$,
 $$
  \sum_{k=1}^{d_\ell} c_k = \sum_{k=1}^{d_\ell} \| c_k y_k \|_{{\xx_{\ell+1}}}
  = \sum_{k=1}^{d_\ell} \| w^{(\ell+1)}_{ k} \|_{\xx_{\ell+1}}  ,
 $$
 which concludes the proof.
 \begin{figure}
$$
 \xymatrix{
 \xx \ar[r]^{f_{\mu_0^*}} & \xx_1 \ar[r]^{f_{\mu_1^*}} & \cdots \ar[r]^{f_{\mu_{L-2}^*}} & \xx_{L-1} \ar[r]^{f_{\mu_{L-1}^*}} & \xx_{L} \ar[r]^{f_{\mu_{L}^*}} & \R^p}
 $$
 $$
  \xymatrix{
 \xx  \ar[r]^{f_{\mu_0^*}} & \xx_1 \ar[r]^{f_{\mu_1^*}} & \cdots \ar[r]^{f_{\mu_{L-2}^*}} & \xx_{L-1} \ar[r]^{f_{P_L\mu_{L-1}^*}} & \widetilde{\xx}_{L} \ar[r]^{f_{\widetilde{\mu}_{L}}} & \R^p}
$$
 $$
  \xymatrix{
 \xx  \ar[r]^{f_{\mu_0^*}} & \xx_1 \ar[r]^{f_{\mu_1^*}} & \cdots \ar[r]^{f_{P_{L-1}\mu_{L-2}^*}\quad} & \widetilde{\xx}_{L-1} \ar[r]^{f_{\widetilde{\mu}_{L-1}}} & \widetilde{\xx}_{L} \ar[r]^{f_{\widetilde{\mu}_{L}}} & \R^p}
$$
$$
\vdots
$$
 $$
  \xymatrix{
\xx  \ar[r]^{f_{\widetilde{\mu}_0}} & \widetilde{\xx}_1 \ar[r]^{f_{\widetilde{\mu}_1}} & \cdots \ar[r]^{f_{\widetilde{\mu}_{L-2}}} & \widetilde{\xx}_{L-1} \ar[r]^{f_{\widetilde{\mu}_{L-1}}} & \widetilde{\xx}_{L} \ar[r]^{f_{\widetilde{\mu}_{L}}} & \R^p}
$$
\caption{Proof structure of Theorem~\ref{thm:representer}.}
\label{fig:representer_thm}
\end{figure}
\end{proof}

As established in Corollary~\ref{cor:representer}, in the special case of a neural RKBS, the minimizer $f^{\rm deep}$ takes the form of a deep neural network with finite width at each hidden layer according to Definition~\ref{def:nn}.
\begin{rmk}\label{rmk:assumption-satisfied}
    For neural RKBS,
\ref{asp:x(th)} is satisfied with $ \widetilde{\rho}_\ell(t,n) = \si(t+c(\theta))\be(\theta)$ for all $ \ell =1 , \dots, L$. Furthermore, if $\si$ is Lipschitz with Lipschitz constant $C_\si$, \ref{asp:lip} is satisfied with $
C_\ell = C_\si $ and $ g_\ell(\theta) = \de_{\theta} $ for all $ \ell = 1 , \dots , L $. Indeed, for all $ x , x' \in\xx_\ell$ and $ \theta\in\Theta_\ell$, we have that
 \begin{align*}
  | \rho_\ell(x,\theta) - \rho_\ell(x',\theta) | &= |\sigma(x(\theta)+c_\ell(\theta))-\sigma(x'(\theta)+c_\ell(\theta))||\be_\ell(\theta)|\\&\le C_\si | x(\theta)-x'(\theta)||\be_\ell(\theta)| .
  \end{align*}
\end{rmk}

\begin{cor}[Representer theorem for neural RKBS] \label{cor:representer}
Under the assumptions of \Cref{thm:representer},
if $ \hh^{\rm deep} $ is a neural RKBS,
then there exist
\begin{alignat*}{1}
 & d_1 , \dots, d_L \in \N, \qquad d_\ell \le N d_{\ell+1}, \\
 & W^{(\ell)} \in \R^{d_\ell\times d_{\ell-1}},
 \qquad
 b^{(\ell)} \in \R^{d_{\ell}}, \qquad \ell = 1 , \dots, L+1,
\end{alignat*}
such that
\begin{equation}
f^{\rm deep}(x) = x^{(L+1)} \in \R^{p}, \qquad x\in\xx,
\end{equation}
with $x^{(L+1)}$ given by recursive formula
\begin{equation}
      \begin{cases}
        x^{(1)} = W^{(1)} x + b^{(1)} \\
        x^{(\ell+1)} = W^{(\ell+1)} \sigma( x^{(\ell)} ) +
        b^{(\ell+1)}, \qquad \ell=1,\ldots,L, 
      \end{cases}
    \end{equation}
is a solution of the minimization problem
\begin{equation}\label{eq:ermp_neural}
\min_{f^{\rm deep} \in \hh ^{\rm deep}} \rr(f^{\rm deep}) + \Phi(f^{\rm deep})   .
\end{equation}
\end{cor}
\begin{proof}
Revisiting the proof of \Cref{thm:representer},
for all $\ell=1,\dots,L+1$, we can choose a basis
    $\{e^{(\ell)}_k\}_{k=1}^{\operatorname{dim}\xx_\ell}$ of $\xx_{\ell}$
    such that
    $$
    \widetilde{\xx}_{\ell} \subset  \operatorname{span} \{e^{(\ell)}_1,\ldots,e^{(\ell)}_{d_\ell} \} ,
    $$
so that the
    elements
    $w^{(\ell)}_1,\ldots,
    w^{(\ell)}_{d_{\ell-1}}\in\widetilde{\xx}_\ell$ can be identified
    with vectors in $\R^{d_\ell}$ and collected in a
    $d_{\ell}\times d_{\ell-1}$ matrix
    \[
      U^{(\ell)} = \left(
        \begin{array}{c|c|c}
          w^{(\ell)}_1 \beta_{\ell-1}(\theta_1^{(\ell-1)})& \cdots &
                                                                   w^{(\ell)}_{d_{\ell-1}}\beta_{\ell-1}(\theta_{d_{\ell-1}}^{(\ell-1}) )
        \end{array}
      \right)  ,
    \]
    where $\beta_0=1$.  Similarly, for all $\ell=1,\ldots,L$, the elements
    $K^{(\ell)} (\cdot,\theta^{(\ell)}_1),\ldots, K^{(\ell)}
    (\cdot,\theta^{(\ell)}_{d_{\ell}})\in \widetilde{\xx}_{\ell} \simeq
    \R^{d_\ell}$ define a $d_{\ell}\times d_{\ell}$ matrix
    \[
      V^{(\ell)} = \left(
        \begin{array}{c|c|c}
          K^{(\ell)}  (\cdot,\theta^{(\ell)}_1)^\top & \cdots &  K^{(\ell)} (\cdot,\theta^{(\ell)}_{d_{\ell}})^\top
        \end{array}
      \right)  ,
    \]
    and the offsets
    $c_\ell(\theta^{(\ell)}_1),
    \ldots,c_\ell(\theta^{(\ell)}_{d_{\ell}})\in\R$ a vector
    \[c^{(\ell)} = \left(c_\ell(\theta^{(\ell)}_1), \ldots,
        c_\ell(\theta^{(\ell)}_{d_{\ell}}) \right)^\top \in \R^{d_\ell}.  \]
    Hence, \eqref{eq:6} reads as
    \[
      \begin{cases}
        x^{(0)} = (x(\theta_1^{(0)}),\ldots, x(\theta_{d_0}^{(0)}))^\top \\
        x^{(1)} = U^{(1)} x^{(0)} \\
        x^{(\ell+1)} = U^{(\ell+1)} \left(\sigma (
          V^{(\ell)} x^{(\ell)} + c^{(\ell)}) \right),\qquad
        \ell=1,\ldots, L.
      \end{cases}
    \]
For all
    $\ell=1,\ldots,L+1$, define the matrix
    \[ W^{(\ell)} = V^{(\ell)}U^{(\ell)} \in \R^{d_\ell\times d_{\ell-1}}, \]
    where $V^{(L+1)}=\operatorname{Id}_{d_{L+1}\times d_{L+1}}$, and the vector
    \[
      b^{(\ell)}= c^{(\ell)}\in \R^{d_{\ell}},
    \]
    where $c^{(L+1)}=0\in\R^{p}$.
    Up to redefining the points $x^{(\ell)}$ for $\ell=1,\ldots, L$, a
    solution $f^{\rm deep}$ of the empirical risk minimization problem \eqref{eq:ermp_neural} is given by $f^{\rm deep}(x)=x^{(L+1)}\in\R^p$ with $x^{(L+1)}$ given by the recursive formula 
       \[
      \begin{cases}
        x^{(0)} = (x(\theta_1^{(0)}),\ldots, x(\theta_{d_0}^{(0)}))^\top \\
        x^{(1)} = W^{(1)} x^{(0)}+b^{(1)} \\
        x^{(\ell+1)} = W^{(\ell+1)}\sigma (
          x^{(\ell)})+b^{(\ell+1)},\qquad
        \ell=1,\ldots, L,
      \end{cases}
    \]
    which concludes the proof.
\end{proof}

\section{Discrete Neural RKBS}\label{sec:discrete-neural-rkbs}

 Next, we construct a particular instance of neural RKBS with a countably infinite number of neurons per hidden layer, which we call \emph{discrete} neural RKBS, see Figure~\ref{fig:discrete-neural-rkbs}. This further specialization to these spaces allows for a more explicit characterization of the complexity measure $\Phi$ in the corresponding representer theorem.
 
In the following formulation, differently from the one in \Cref{bartolucci}, the spaces $\Theta_\ell$ of the hidden layers play the role of index spaces for the  parameters,
and they do not directly correspond to the spaces where parameters live.

\begin{dfn}[Discrete Neural RKBS]\label{dfn:discrete-neural-rkbs}
Fix the parameter spaces $\Theta_\ell$ and the layer spaces $\xx_\ell$ as follows
\begin{alignat*}{3}
 &  \Theta_0 = \{ 0 , \dots, d \}, \qquad && \xx_0 = \ell^2(\{ 1 ,
 \dots, d \}) = \R^d, \qquad &&  \\
 & \Theta_\ell = \N, \qquad &&
 \xx_{\ell} = \ell^2(\N), \qquad &&\ell = 1 , \dots , L, \\
 & \Theta_{L+1} = \{1,\dots,p\}, \qquad && \xx_{L+1} =
 \ell^2(\{1,\dots,p\}) = \R^p, \qquad  && \ell=L+1,
\end{alignat*}
and let $ \sigma : \R \to \R $ be a Lipschitz activation function such that $\sigma(0)=0$. 
Then, set
\begin{alignat*}{2}
 & \rho_0 (x , n) =
 \begin{cases}
   1 & \hspace{29pt} n=0 \\
   x_n & \hspace{29pt} n=1,\ldots,d 
 \end{cases}\quad,
\qquad && x\in \R^d  \\
 & \rho_\ell (x , n) =
 \begin{cases}
   1 & n=0 \\
   \sigma(x_{n-1}) & n\geq 1 
 \end{cases}\quad, && x\in\ell^2(\N),
\qquad \ell = 1 , \dots , L \ .
\end{alignat*}


We call the resulting space $\hh^{\rm deep}$ a \emph{discrete} neural RKBS.
\end{dfn}

\begin{rmk} Some observations on Definition~\ref{dfn:discrete-neural-rkbs} are in order. First, note that  $ \xx_0 $ can be thought of as a function space on $ \Th_0 $
by putting $ x ( 0 ) = 1 $ for all $ x \in \xx_0 $. Second, the Lipschitzianity of $\sigma$ implies  that 
\begin{equation}
  \label{eq:2a}
  \sigma(x)\in\ell^2(\N),  \qquad \text{for all }
  x\in\ell^2(\N),
\end{equation}
where $\sigma$ applies on sequences component by component.
Indeed, if $C_\si$ denotes the Lipschitz constant of $\sigma$, we have 
\begin{align*}
     |\sigma(x_n)|   &= |\sigma(x_n)-\sigma(0)|\leq C_\si  |x_n-0|=C_\si|x_n| .
\end{align*}
In particular, this implies that  
\begin{equation}
 \label{eq:2}
  \lim_{n\to\infty} \sigma(x_n)    = 0, \qquad \text{for all }
 x\in\ell^2(\N) \ .
\end{equation}

Finally, assuming $\sigma(0)=0$ is not restrictive. In fact, if $\sigma(0)\ne0$, the choice of the activation function $\sigma'=\sigma-\sigma(0)$ simply implies a scaling of the offsets. 
\end{rmk}

\subsection*{General form of discrete neural RKBS functions}
An element $f^{\rm deep}$ of the neural RKBS $\hh^{\rm deep}$
is a composition of $L+1$ integral functions $f_0$,\ldots,$f_L$, where each $f_\ell$ is defined via a measure $\mu_\ell$ by $ f_\ell = f_{\mu_\ell} $. We now derive an explicit expression for elements in a discrete neural RKBS, showing that these functions correspond to infinite-width neural networks with a countably infinite number of neurons per hidden layer, see Figure~\ref{fig:discrete-neural-rkbs}.
\begin{figure}
$$
 \xymatrix{
 \R^{d} \ar@/_1.5pc/[rrrr]_{f^{\rm deep}} \ar[r]^{f_0} & \ell^2(\N) \ar[r] & \cdots \ar[r] & \ell^2(\N) \ar[r]^{f_L} & \R^p
 }
$$
\begin{tikzpicture}[->,shorten >=1pt,>=stealth]

\tikzstyle{neuron} = [circle, draw=black, fill=white,
                      minimum size=20pt, inner sep=0pt]
\tikzstyle{dots}   = [minimum size=10pt, inner sep=0pt]

\node[neuron] (I1) at (0,  2)  {$x_1$};
\node[neuron] (I2) at (0,  1)  {$x_2$};
\node[dots]   (Id) at (0,  0)  {$\vdots$};
\node[neuron] (Idn)at (0, -1)  {$x_d$};

\node[dots]   (H1top)  at (2,  2.5)  {$\vdots$};
\node[neuron] (H1A)    at (2,  1.5)  {};
\node[dots]   (H1mid)  at (2,  0.5)  {$\vdots$};
\node[neuron] (H1B)    at (2, -0.5)  {};
\node[dots]   (H1bot)  at (2, -1.5)  {$\vdots$};

\node[dots]   (H2top)  at (4,  2.5)  {$\hdots$};
\node[dots] (H2A)    at (4,  1.5)  {$\hdots$};
\node[dots]   (H2mid)  at (4,  0.5)  {$\hdots$};
\node[dots] (H2B)    at (4, -0.5)  {$\hdots$};
\node[dots]   (H2bot)  at (4, -1.5)  {$\hdots$};

\node[dots]   (H3top)  at (6,  2.5)  {$\vdots$};
\node[neuron] (H3A)    at (6,  1.5)  {};
\node[dots]   (H3mid)  at (6,  0.5)  {$\vdots$};
\node[neuron] (H3B)    at (6, -0.5)  {};
\node[dots]   (H3bot)  at (6, -1.5)  {$\vdots$};

\node[neuron] (O1)   at (8,  2)  {$y_1$};
\node[neuron] (O2)   at (8,  1)  {$y_2$};
\node[dots]   (Od)   at (8,  0)  {$\vdots$};
\node[neuron] (Op)   at (8, -1)  {$y_p$};

\foreach \i in {I1,I2,Id,Idn}
  \foreach \j in {H1top,H1A,H1mid,H1B,H1bot}
    \draw (\i) -- (\j);

\foreach \i in {H1top,H1A,H1mid,H1B,H1bot}
  \foreach \j in {H2top,H2A,H2mid,H2B,H2bot}
    \draw (\i) -- (\j);

\foreach \i in {H2top,H2A,H2mid,H2B,H2bot}
  \foreach \j in {H3top,H3A,H3mid,H3B,H3bot}
    \draw (\i) -- (\j);

\foreach \i in {H3top,H3A,H3mid,H3B,H3bot}
  \foreach \j in {O1,O2,Od,Op}
    \draw (\i) -- (\j);

\end{tikzpicture}
\caption{Architecture of discrete neural RKBS functions.}
\label{fig:discrete-neural-rkbs}
\end{figure}

{\it Case $\ell=0$.} From layer $0$ to layer $1$, we have
$$
\mu_0 \in \mm(\{0,\dots,d\},\ell^2(\N))  ,
$$
so that 
\[
  \quad \mu_0  = \sum_{m=0}^d  w^{(1)}_{m} \delta_m, 
\]
for a family of $d+1$ vectors $ w^{(1)}_{0},\ldots w^{(1)}_{d} \in \ell^2(\N) $. Let
  $b^{(1)} = w^{(1)}_{0} \in \ell^2(\N)$ and define the bounded
  operator 
\[ W^{(1)} \in B(\R^d, \ell^2(\N)), \qquad
 W^{(1)} x  = \sum_{m=1}^d  w^{(1)}_{m} x_m  .
\]
Then, for $ x \in \R^d $,  the function $f_0:\R^d\to\ell^2(\N)$ is
\[
f_0(x) =  \sum_{m=0}^d  w^{(1)}_{m}  \rho_0(x,m)=
w^{(1)}_0+\sum_{m=1}^d  w^{(1)}_{m} x_m =
W^{(1)} x + b^{(1)} \in \ell^2(\N)  ,
\]
and the scalar components of  $f_0$ are 
\[
f_0(x)_n 
 =   \langle x , w^{(1)}_{n\cdot} \rangle_{\R^d} +    b^{(1)}_n, \qquad n\in\N,
\]
where $w^{(1)}_{n\cdot}\in\R^d$  with $w^{(1)}_{n m } = (w^{(1)}_m)_n$.


{\it Case $ \ell = 1 , \dots , L-1 $}. From layer $\ell$ to layer $\ell+1$ we have
$$
 \mu_\ell \in \mm(\N,\ell^2(\N))  ,
$$
so that
\[
\mu_\ell  = \sum_{m=0}^\infty  w^{(\ell+1)}_{m} \delta_m,
\]
for a countable family of vectors $ w^{(\ell+1)}_{0},\ldots,
w^{(\ell+1)}_{m},\ldots \in \ell^2(\N) $ such that
  \begin{equation}
\nor{\mu_\ell}{ \text{TV}}=\sum_{m=0}^\infty  \nor{w^{(\ell+1)}_{m}}{\ell^2(\N)}
<\infty\label{eq:1} .
\end{equation}
 As before, set 
  $b^{(\ell+1)} = w^{(\ell+1)}_{0} \in \ell^2(\N)$ and 
\[ W^{(\ell+1)} \in B(\ell^2(\N), \ell^2(\N)),\qquad
 W^{(\ell+1)} x  = \sum_{m=1}^{+\infty}  w^{(\ell+1)}_{m} x_{m-1}  ,
\]
where the series converges absolutely in $\ell_2(\N)$ due
to~\eqref{eq:1}. 
Hence, for $ x \in \ell_2(\N)$, 
  \begin{alignat*}{1}
    f_\ell(x) & = \sum_{m=0}^{+\infty} w^{(\ell+1)}_{m} \rho_\ell(x,m)=
    w^{(\ell+1)}_0+\sum_{m=1}^{+\infty} w^{(\ell+1)}_{m}
    \sigma(x_{m-1}) \\
& = W^{(\ell+1)} \left(\sigma(x)\right) +
    b^{(\ell+1)} \in \ell^2(\N)  ,
  \end{alignat*}
where $\sigma(x)\in\ell^2(\N)$ by~\eqref{eq:2a}. The
component of $f_\ell(x)$ are 
\[
f_\ell (x)_n 
 =  \scal{\sigma(x)}{ w^{(\ell+1)}_{n\cdot}}{}{\ell^2}
 +    b^{(\ell+1)}_n, \qquad n\in\N ,
\]
where $w^{(\ell+1)}_{n\cdot}\in\ell^2(\N)$  and $w^{(\ell+1)}_{n m } = (w^{(\ell+1)}_m)_n$.

{\it Case $\ell=L$.} Finally, from layer $L$ to layer $L+1$, we have
$$
 \mu_L \in \mm(\N,\R^p) ,
$$
so that
\[
\mu_L  = \sum_{m=0}^\infty  w^{(L+1)}_{m} \delta_m,
\]
for a countable family of vectors $ w^{(L+1)}_{0},\ldots,
w^{(L+1)}_{m},\ldots \in \R^p$ such that
  \begin{equation}
\sum_{m=0}^\infty  \nor{w^{(\ell+1)}_{m}}{\R^p}
<+\infty\label{eq:1a} .
\end{equation}
 As before, set 
  $b^{(L+1)} = w^{(L+1)}_{0} \in \ell^2(\N)$ and 
\[ W^{(L+1)} \in B(\ell^2(\N), \R^p), \qquad
 W^{(L+1)} x  = \sum_{m=1}^{+\infty}  w^{(L+1)}_{m} x_{m-1}  ,
\]
where the series  converges absolutely in $\R^p$ due
to~\eqref{eq:1a}. 
Hence, for $ x \in \ell_2(\N)$, 
  \begin{alignat*}{1}
    f_L(x) & = \sum_{m=0}^{+\infty} w^{(L+1)}_{m} \rho_L(x,m)=
    w^{(L+1)}_0+\sum_{m=1}^{+\infty} w^{(L+1)}_{m}
    \sigma(x_{m-1}) \\
& = W^{(L+1)} \left(\sigma(x)\right) +
    b^{(L+1)} \in \ell^2(\N)  ,
  \end{alignat*}
where $\sigma(x)\in\ell^2(\N)$ by~\eqref{eq:2a}. The
component of $f_L(x)$ are 
\begin{equation}
f_L (x)_n 
 =  \scal{\sigma(x)}{ w^{(L+1)}_{n\cdot}}{}{\ell^2} +
 b^{(L+1)}_n, \qquad n=1,\ldots,p ,\label{eq:3}
\end{equation}
where $w^{(L+1)}_{n\cdot}\in\ell^2(\N)$  and $w^{(L+1)}_{n m } = (w^{(L+1)}_m)_n$.


By iteration $ x^{(\ell+1)} = f_\ell(x^{(\ell)}) $,
we  obtain
$$
\begin{cases}
 x^{(0)} = x & \in \R^d \\[5pt]
 x^{(1)} =  W^{(1)} x^{(0)} + b^{(1)} & \in \ell^2(\N) \\[5pt]
 x^{(\ell+1)} = W^{(\ell+1)} \left(\si(x^{(\ell)})\right) + b^{(\ell+1)}
 & \in \ell^2(\N) \\[5pt]
 x^{(L+1)} = W^{(L+1)} \left(\si(x^{(L)})\right) + b^{(L+1)} & \in \R^p \ .
\end{cases}
$$
We stress that  the width of the $L$ hidden layers
$ \ell = 1 , \dots , L $ is infinite and countable (the neurons are
parameterized by $\N$),  while input and output layers $\ell=0$ and $\ell=L+1$ have fixed finite widths $d$ and $p$, respectively.
Also note that infinite-width neural networks generalize finite-width neural networks, 
where the  inner layers are generated by infinite-rank operators.

We can visualize the shallow ($L=1$), and the simplest non-shallow case ($L=2$), 
considering a non-iterative expression.
For $ L = 1 $, we have
$$
 f(x) = W^{(2)} \left(\si(W^{(1)} x + b^{(1)})\right) + b^{(2)} \ .
$$
In the case of $ L = 2 $ hidden layers, we can write
$$
 f(x) = W^{(3)} \left(\si\left(W^{(2)} \left(\si(W^{(1)} x + b^{(1)}) \right)+ b^{(2)}\right)\right) + b^{(3)} \ .
$$

Comparing our construction to the one proposed in \cite{doi:10.1137/21M1418642},
we remark that our networks do not have any rank constraint,
in the sense that every hidden layer has infinte width. 


\subsection*{Finite form of discrete neural RKBS functions}

We now show that the neural functions defined in~\ref{def:nn}
  correspond to measures $\mu_1,\ldots,\mu_L$ having finite
  support. Indeed,  under this assumption,  for each $\ell=1,\ldots, L$,
\[
\mu_\ell = b^{(\ell+1)} \delta_0 + \sum_{k=1}^{d_\ell}
w^{(\ell+1)}_{k}  \delta_{m^{(\ell)}_{k}},
\]
for some $m^{(\ell)}_{1},\ldots,
m^{(\ell)}_{d_\ell} \in\N\setminus\{0\}$ and some $w^{(\ell+1)}_{1},\ldots,
w^{(\ell+1)}_{k} $ that are in $\ell_2(\N)$ if $\ell<L$, and in $\R^p$ if $\ell=L$.

We define $f_\ell$ starting
from the last layer. 
Since the support of $\mu_L$ is
$\{0,m^{(L)}_1,\ldots,m^{(L)}_{d_L}\}$, 
by~\eqref{eq:3} $f_L$ depends only on the variables
$m^{(L)}_1,\ldots, m^{(L)}_{d_L}$, so that we can regard $f_L$ as a
function from $\R^{d_L}$ to $\R^p$ given by
\[ f_L(x)= W^{(L+1)}\sigma(x) + b  , \]
where $W^{(L+1)}$ is the $d_L\times d_{L+1}$ matrix (recall that
$d_{L+1}=p$) with components
\[
W^{(L+1)}_{nk} = (w^{(L+1)}_{m_k^{(L)}})_{n-1}, \qquad n=1,\ldots,p, \qquad k=1,\ldots,d_L  .
\]
Since $f_L$ is defined on $\R^{d_L}$, regarded as a finite-dimensional
subspace of $\ell_2(\N)$, denoting by $P:\ell_2(\N)\to\R^{d_L}$ the corresponding
projection 
\[
P  x = (x_{m^{(L)}_1},\ldots, x_{m^{(L)}_{d_L}})  ,
\]
for all $x\in\ell_2(\N)$ we have
\[
f_{L}(f_{L-1}(x))= f_{L}(f_{\mu_{L-1}}(x)) = f_{L}(f_{P\mu_{L-1}}(x))
 .
\]
Then,  without loss of generality, we can assume that the measure
$\mu_{L-1}$ is in  $\mathcal M(\N,\R^{d_L})$ and it has a finite
support. By iterating this procedure, we can assume that for all $\ell$
\[
\mu_\ell \in \mathcal M(\{0,\ldots,d_\ell\},\R^{d_{\ell+1}}),
\]
for some $d_0, d_1 , \dots , d_L,d_{L+1}\in \N$ (with $d_0=d$
and $d_{L+1}=p$).  This means that
\[
\mu_\ell = b^{(\ell+1)} \delta_0 + \sum_{k=1}^{d_\ell} w^{(\ell+1)}_k \delta_k  .
\]
For all $\ell=1,\dots,L+1$,  let $W^{(\ell)}$ be the $n_{d_{\ell-1}}\times
n_{d_\ell}$ matrix 
\[
W^{(\ell)}_{nk} = (w^{(\ell)}_{k})_n\qquad n=1,\ldots,d_{\ell},
\ k=1,\ldots,d_{\ell-1}  .
\]
Then  $f_\ell=f_{\mu_\ell}:\R^{d_{\ell-1}}\to \R^{d_\ell}$  is given by
\[
f_\ell(x)=
\begin{cases}
W^{(1)} x+ b^{(1)} & \ell=0 \\
  W^{(\ell+1)} \sigma(x)+ b^{(\ell+1)} & \ell>1 \quad
\end{cases},
\]
\[ 
f^{\rm deep}= f_L\circ \ldots\circ f_0,
\]
is a neural deep function according to \Cref{def:nn},
and we can rewrite $f^{\rm deep}$ as
$$
\begin{cases}
 x^{(1)} = W^{(1)} x + b^{(1)} & \in \R^{d_1} \\[5pt]
 x^{(\ell+1)} = W^{(\ell+1)} \sigma(x^{(\ell)}) + b^{(\ell+1)} & \in \R^{d_{\ell+1}} \qquad \ell = 1 , \dots , L \ ,
\end{cases}
$$
that is, $f^{\rm deep}$ is a neural network of depth $L$ and (finite) widths $ d_1 , \dots , d_L $.

\begin{rmk}\label{rmk:ass_discrete_neural_rkbs}
    For discrete neural RKBS,
\Cref{asp:x(th)} is satisfied replacing the basis functions in Definition~\ref{dfn:discrete-neural-rkbs} by $\rho_\ell(x,n) = \si(x_{n-1})\be_{n-1}$ for all $ \ell =1 , \dots, L$, with $\sigma\colon\R\to\R$ Lipschitz and $\be:\N\to\R$ a positive sequence converging to zero. Furthermore, \Cref{asp:lip} is satisfied with $
C_\ell = C_\si $ and $ g_\ell(n) = \de_{n-1} $ for all $ \ell = 1 , \dots , L $. Indeed, for all $ x , x' \in\ell^2(\N)$ and $ n\in\N$,
 \begin{equation*}
  | \rho_\ell(x,n) - \rho_\ell(x',n) | \le |\sigma(x_{n-1})-\sigma(x'_{n-1})||\be_{n-1}|\le C_\si | x_{n-1}-x'_{n-1}| |\be_{n-1}| .
 \end{equation*}
\end{rmk}

\begin{cor}[Representer theorem for discrete neural RKBS]
Under the assumptions of \Cref{thm:representer},
if $ \hh^{\rm deep} $ is a discrete neural RKBS,
then the claim of \Cref{cor:representer} holds true.
Moreover,
\begin{equation*}
 \Phi(f^{\rm deep}) \le \sum_{\ell=0}^L \sum_{k=1}^{d_\ell} \biggl( \sum_{j=1}^{d_{\ell+1}} | W^{(\ell+1)}_{jk} \be_k^{-1} |^2 \biggr)^{1/2} .
\end{equation*}
\end{cor}
\begin{proof}
Since $\Theta_0=\{0,\ldots,d\}$, without loss of generality we can
assume $d_0=d+1$ and $\theta_n^{(0)}=n$ for all
$k=0,\ldots,d_0$. Taking into account that $\rho_0(x,0)=1$ and
$\rho_0(x,k)=x_k$ if $1\leq k\le d$, we have
\[
x^{(0)} =  (1,x)\in \R\times \R^d.
\]
Since, for $\ell=1,\ldots,L$, $\Theta_\ell=\N$ and
$K_\ell(\cdot,n)=e_{n-1}$ where $\{e_n\}_{n\in\N}$ is the
canonical base of $\xx_\ell=\ell_2(\N)$, up to a permutation,
$V^{(\ell)}$ is the identity and the claim follows.
\end{proof}

The above result  shows that  deep neural networks with finite width at each hidden layer
are optimal, in the sense that 
they are solutions of  empirical risk minimization over neural RKBS.
Moreover, it provides an upper bound on the network width depending on sample size and input/output dimensions.
Finally, it shows that the regularization norm is controlled by the $\ell^1$ norm of the $\ell^2$ norms of the weights of the network.

\section{Conclusions and future work}
Studying function spaces defined by neural networks provides a natural way to understand their properties. Recently, reproducing kernel Banach spaces have emerged has a useful concept to study shallow networks.

In this paper,  we take a step towards more complex architectures considering deep networks. We allow for a wide class of activation functions and remove unnecessary low rank constraints. Our main contributions are defining classes of neural RKBS obtained composing vector-valued RKBS and deriving corresponding representer theorems borrowing ideas from \cite{doi:10.1137/21M1418642}. 

Future developments include considering more structured architectures, for example  convolutional networks, as well as investigating the statistical and computational properties of neural RKBS. Moreover, finer characterizations of the Banach structure could be obtained using the specific form of the activation function and the functional properties that this induces (see \cite{doi:10.1137/21M1418642} for the ReLU).

\section*{Acknowledgments}
L. R. acknowledges the financial support of the European Research Council (grant SLING 819789), the European Commission (Horizon Europe grant ELIAS 101120237), the US Air Force Office of Scientific Research (FA8655-22-1-7034), the Ministry of Education, University and Research (FARE grant ML4IP R205T7J2KP; grant BAC FAIR PE00000013 funded by the EU - NGEU) and the Center for Brains, Minds and Machines (CBMM), funded by NSF STC award CCF-1231216. This work represents only the view of the authors. The European Commission and the other organizations are not responsible for any use that may be made of the information it contains. The research by E.D.V. and L. R. has been supported by the MIUR grant PRIN 202244A7YL. The research by E.D.V. has been supported by the MIUR Excellence Department Project awarded to Dipartimento di Matematica, Universit\`a di Genova, CUP D33C23001110001. E.D.V. is a member of the Gruppo Nazionale per l’Analisi Matematica, la Probabilit\`a e le loro Applicazioni (GNAMPA) of the Istituto Nazionale di Alta Matematica (INdAM). This work represents only the view of the authors. The European Commission and the other organizations are not responsible for any use that may be made of the information it contains.
This work was partially supported by the MUR Excellence Department Project MatMod@TOV awarded to the Department of Mathematics, University of Rome Tor Vergata, CUP E83C18000100006. S.V. also acknowledges financial support from the MUR 2022 PRIN project GRAFIA, project code 202284Z9E4.

\printbibliography

\appendix

\section{Sparse solutions to finite-dimensional variational problems} \label{sec:appendix}

The key ingredient to establish our representer theorem
is given by a powerful variational result proved in \cite{MR4040623}.
This result deals with general minimization problems with finite-dimensional constraints
and seminorm penalization.
It states that such problems admit sparse solutions,
namely finite linear combinations of extremal points of the seminorm unit ball.
We report the formal statement below, after recalling the definition of extremal point.

\begin{dfn}[Extremal point] \label{dfn:extr-points-theor}
Let $Q$ be a convex subset of a locally convex space.
A point $ q \in Q $ is called \emph{extremal}
if $ Q \setminus \{q\} $ is convex,
that is,
there do not exist $ p,r \in Q $, $ p \ne r $,
such that $ q = tp + (1-t)r $ for some $ t \in (0,1) $.
We denote the set of extremal points of $Q$ by $ \Ext ( Q )$.
\end{dfn}

\begin{thm}[{\cite[Theorem 3.3]{MR4040623}}] \label{thm:bredies-carioni}
Consider the problem
\begin{equation} \label{eq:problem0}
 \argmin_{u \in U} F( \aa u ) + G(u) ,
\end{equation}
where
$U$ is a locally convex topological vector space,
$ \aa : U \to H  $ is a continuous, surjective linear map with values in a finite-dimensional Hilbert space $H$,
$ F : H \to (-\infty,+\infty] $
is proper, convex, coercive and lower semi-continuous,
and $ G : U \to [0,+\infty) $ is a coercive and lower semi-continuous norm.
Let $ B_U(1) $ denote the unit ball $ \{ u \in U : G(u) \le 1 \} $.
Then \eqref{eq:problem0} has solutions of the form
$$
 \sum_{k=1}^K c_k u_k,
$$
with $ u_k \in \Ext( B_U(1) ) $,
$ c_k > 0 $,
$ K \le \dim H $,
and $ \sum_{k=1}^K c_k = G(u) $.
\end{thm}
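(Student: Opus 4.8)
The plan is to follow the standard two‑part strategy: a direct‑method existence argument, then a finite‑dimensional convex‑geometry argument that extracts a sparse minimizer. First I would establish that \eqref{eq:problem0} has a solution. Since $F$ is lower semi‑continuous and coercive on the finite‑dimensional space $H$, it is bounded below, so along any minimizing net $(u_i)$ the values $G(u_i)$ stay bounded; by coercivity of $G$ the net lies in a compact sublevel set $\{G\le t\}$, and passing to a convergent subnet, lower semi‑continuity of $u\mapsto F(\aa u)+G(u)$ (composition of the continuous $\aa$ with the l.s.c.\ $F$, plus the l.s.c.\ $G$) yields a minimizer $\bar u$. Write $h^*=\aa\bar u\in H$ and let $m^*$ be the minimal value. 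Next I would reduce to a minimal‑norm interpolation problem: set $t^*=\inf\{G(u):\aa u=h^*\}$, which is attained (intersect the closed constraint set with the compact set $\{G\le G(\bar u)\}$ and use l.s.c.\ of $G$). Any minimizer $u^*$ of this problem is also a solution of \eqref{eq:problem0}, since $F(\aa u^*)=F(h^*)$ and $G(u^*)=t^*\le G(\bar u)$. Hence it suffices to build a sparse $\hat u$ with $\aa\hat u=h^*$ and $G(\hat u)=t^*$.

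The heart of the argument is the geometry of the set $M=\aa(B_U(1))\subseteq H$. Since $B_U(1)=\{G\le1\}$ is compact and $\aa$ is linear and continuous, $M$ is a compact, convex, balanced set; surjectivity of $\aa$ together with $B_U(1)$ being absorbing (it is the unit ball of a norm) forces $H=\bigcup_n nM$, so by Baire $M$ has nonempty interior and, being balanced and convex, $0\in\operatorname{int}M$, i.e.\ $M$ is a convex body. The definition of $t^*$ says exactly that $h^*/t^*\in M$ while $\lambda\,h^*/t^*\notin M$ for every $\lambda>1$, i.e.\ $h^*/t^*$ is a boundary point of $M$ (we may assume $t^*>0$; otherwise $\hat u=0$ is already a solution). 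Take a supporting hyperplane at $h^*/t^*$, i.e.\ $\eta\in H$ with $\langle\eta,h^*/t^*\rangle=\max_{m\in M}\langle\eta,m\rangle=1$, and let $Q=\{m\in M:\langle\eta,m\rangle=1\}$ be the exposed face. Then $Q$ is compact and convex and lies in an affine hyperplane, hence $\dim\operatorname{aff}(Q)\le\dim H-1$; by the Minkowski--Carath\'eodory theorem, $h^*/t^*=\sum_{k=1}^{K}\lambda_k m_k$ with $m_k\in\Ext(Q)$, $\lambda_k>0$, $\sum_k\lambda_k=1$ and $K\le\dim H$. Moreover $\Ext(Q)\subseteq\Ext(M)$, since if $m\in Q$ and $m=\tfrac12(a+b)$ with $a,b\in M$ then $\langle\eta,a\rangle=\langle\eta,b\rangle=1$, so $a,b\in Q$.

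Finally I would lift each $m_k$ to an extreme point of $B_U(1)$. The set $\aa^{-1}(m_k)\cap B_U(1)$ is nonempty (as $m_k\in M$), compact (closed in the compact $B_U(1)$) and convex, so by Krein--Milman it has an extreme point $u_k$, and in fact $u_k\in\Ext(B_U(1))$: if $u_k=\tfrac12(v+w)$ with $v,w\in B_U(1)$, then $\aa v,\aa w\in M$ average to the extreme point $m_k$, hence $\aa v=\aa w=m_k$, and extremality of $u_k$ in $\aa^{-1}(m_k)\cap B_U(1)$ gives $v=w=u_k$. Setting $c_k=t^*\lambda_k>0$ and $\hat u=\sum_{k=1}^{K}c_k u_k$, we get $\aa\hat u=t^*\sum_k\lambda_k m_k=h^*$ and $G(\hat u)\le\sum_k c_k G(u_k)=\sum_k c_k=t^*$; since $\aa\hat u=h^*$, minimality of $t^*$ forces $G(\hat u)=t^*=\sum_k c_k$. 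Then $F(\aa\hat u)+G(\hat u)=F(h^*)+t^*=m^*$, so $\hat u$ is a solution of the required form with $K\le\dim H$.

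I expect the main obstacle to be the geometric step: recognizing $h^*/t^*$ as a boundary point of the compact convex body $M$ and using a single supporting hyperplane to cut down the ambient dimension before invoking Carath\'eodory — this is precisely where finite‑dimensionality of the data space $H$ is used, and it is what produces the sparsity bound $K\le\dim H$. The lifting fact $\Ext(M)\subseteq\aa(\Ext(B_U(1)))$ is routine but must be stated carefully, and in the genuinely infinite‑dimensional setting one has to work with nets rather than sequences throughout the compactness arguments in the first paragraph.
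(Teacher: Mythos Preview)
The paper does not supply its own proof of this statement: it is quoted verbatim as \cite[Theorem~3.3]{MR4040623} and used as a black box, with only the remark that the version stated here is a simplification (a norm $G$ rather than a seminorm). There is therefore no in-paper argument to compare against.

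Your reconstruction is correct and is, in outline, the Bredies--Carioni argument: direct-method existence using coercivity of $G$ and lower semi-continuity of $F\circ\aa+G$; reduction to the minimal-$G$ interpolation problem over $\aa^{-1}(h^*)$; identification of $h^*/t^*$ as a boundary point of the compact convex body $M=\aa(B_U(1))\subset H$; a supporting-hyperplane cut followed by Minkowski--Carath\'eodory on the resulting face $Q$ (this is exactly where finite-dimensionality of $H$ enters and where the bound $K\le\dim H$ comes from, via $\dim\operatorname{aff}(Q)\le\dim H-1$); and finally the lifting $\Ext(M)\subseteq\aa(\Ext(B_U(1)))$ through Krein--Milman on the fibers $\aa^{-1}(m_k)\cap B_U(1)$. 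The small technical points you anticipate --- working with nets in the existence step, the Baire argument giving $0\in\operatorname{int}M$, and the face inclusion $\Ext(Q)\subseteq\Ext(M)$ --- are all handled correctly.
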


\Cref{thm:bredies-carioni} is a simplified version of \cite[Theorem 3.3]{MR4040623},
where $G$ is only assumed to be a seminorm.

In view of \Cref{thm:bredies-carioni},
in order to determine the form of sparse solutions
we need to characterize the set of extremal point of the unit ball in $U$.
In our construction of integral RKBS, $U$ is the space of vector measures
with total variation norm.
The following result provides the desired characterization for our case.
It appears in {\cite{werner1984extreme}} assuming that $\Theta$ is compact,
but the proof works analogously when $\Theta$ is locally compact.
We report the proof for the reader's convenience.

\begin{lem}[{\cite[Theorem 2]{werner1984extreme}}] \label{lem:Ext(B)}
Let $\Th$ be a locally compact, second countable topological space,
and let $\yy$ be a Banach space.
Then
$$ 
\Ext ( B_{\mm(\Th,\yy)}(1) ) = \{ y \cdot \de_{\th} : y \in \Ext(B_{\yy}(1)), \th \in \Th \} .
$$
\end{lem}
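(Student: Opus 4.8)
The plan is to prove the two inclusions separately, characterizing the extreme points of the total variation unit ball $B_{\mm(\Th,\yy)}(1)$.

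\textbf{The easy inclusion ($\supseteq$).} First I would check that $y \cdot \de_\th$ is extremal whenever $y \in \Ext(B_\yy(1))$ and $\th \in \Th$. Note $\| y \cdot \de_\th \|_{\TV} = \|y\|_\yy = 1$, so $y \cdot \de_\th \in B_{\mm(\Th,\yy)}(1)$. Suppose $y \cdot \de_\th = t\mu_1 + (1-t)\mu_2$ with $\mu_i \in B_{\mm(\Th,\yy)}(1)$ and $t \in (0,1)$. Applying this identity on the complement $\Th \setminus \{\th\}$ forces $t\mu_1(E) + (1-t)\mu_2(E) = 0$ for all Borel $E$ disjoint from $\{\th\}$, and a convexity/norm argument on total variations ($1 = \|y\cdot\de_\th\|_{\TV} \le t\|\mu_1\|_{\TV} + (1-t)\|\mu_2\|_{\TV} \le 1$ forces $\|\mu_i\|_{\TV}=1$ with all mass on $\{\th\}$) shows each $\mu_i$ is supported on $\{\th\}$, say $\mu_i = y_i \cdot \de_\th$ with $\|y_i\|_\yy \le 1$. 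Then $y = ty_1 + (1-t)y_2$, and extremality of $y$ in $B_\yy(1)$ gives $y_1 = y_2 = y$, hence $\mu_1 = \mu_2$.

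\textbf{The hard inclusion ($\subseteq$).} Given an extremal $\mu \in B_{\mm(\Th,\yy)}(1)$, I must show $\mu = y\cdot\de_\th$ for some extremal $y$ and some $\th$. The first step is to show $\|\mu\|_{\TV} = 1$: if $\|\mu\|_{\TV} < 1$ then $\mu$ is a proper convex combination of $0$ and $\mu/\|\mu\|_{\TV}$ (both in the ball), contradicting extremality — unless $\mu = 0$, but $0$ is clearly not extremal (it is the midpoint of $\pm\nu$ for any small $\nu$), so $\|\mu\|_{\TV} = 1$. Next, the crucial structural claim: the total variation measure $|\mu|$ must be a single Dirac mass. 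To see this, suppose $|\mu|$ is supported on a set that can be split into two disjoint Borel pieces $A, B$ with $|\mu|(A), |\mu|(B) > 0$; writing $\mu = \mu|_A + \mu|_B$ and setting $t = |\mu|(A) \in (0,1)$, one has $\mu = t \cdot (\mu|_A / t) + (1-t)\cdot(\mu|_B/(1-t))$, a proper convex combination of two distinct elements of the ball, contradicting extremality. Hence $|\mu|$ cannot be split, which (using that $|\mu|$ is a finite positive Radon measure and $\Th$ is second countable, so non-atomic parts can always be split and atoms of unequal... ) forces $|\mu| = \de_\th$ for a single $\th \in \Th$. Therefore $\mu$ is concentrated at $\th$, i.e.\ $\mu = y \cdot \de_\th$ for some $y \in \yy$ with $\|y\|_\yy = \|\mu\|_{\TV} = 1$. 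Finally, extremality of $\mu$ transfers to $y$: if $y = ty_1 + (1-t)y_2$ with $y_i \in B_\yy(1)$, then $\mu = t(y_1\cdot\de_\th) + (1-t)(y_2\cdot\de_\th)$ exhibits $\mu$ as a convex combination in the ball, so $y_1\cdot\de_\th = y_2\cdot\de_\th$, i.e.\ $y_1 = y_2$; thus $y \in \Ext(B_\yy(1))$.

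\textbf{Main obstacle.} The delicate point is the argument that $|\mu|$ reduces to a single Dirac: I need to rule out both the case where $|\mu|$ has a non-atomic component (split it into two halves of equal mass using that $\Th$ is second countable, hence $|\mu|$ is a standard measure admitting such splittings) and the case where $|\mu|$ has two or more atoms (separate one atom from the rest). One must also be careful that splitting $|\mu|$ actually produces \emph{distinct} vector measures $\mu|_A/t \ne \mu|_B/(1-t)$ — if they happened to be equal as measures this would not contradict extremality; but equality would force $|\mu|$ itself to be, after normalization, equal on $A$ and $B$ in a way incompatible with the splitting being nontrivial, so a short additional argument (or choosing the split more cleverly, e.g.\ separating a point where the density behaves differently) handles this. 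I would follow the structure of \cite{werner1984extreme}, noting only that compactness of $\Th$ is never essential — second countability and local compactness suffice to invoke regularity of $|\mu|$ and the Riesz-type identification used implicitly.
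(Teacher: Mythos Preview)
Your approach is essentially the same as the paper's: for the hard inclusion you split $\mu$ via a Borel set $A$ with $0<|\mu|(A)<1$, and for the easy inclusion you evaluate at $\{\th\}$ and use a norm-equality argument (the paper packages this as the projection $\pp\nu=\nu(\{\th\})\cdot\de_\th$ together with the identity $\|\nu\|_{\TV}=\|\pp\nu\|_{\TV}+\|\nu-\pp\nu\|_{\TV}$). Your worry about $\mu|_A/t=\mu|_B/(1-t)$ is unfounded---these are nonzero measures with disjoint supports, so equality would force $\mu(E)=0$ for every Borel $E\subseteq A$, contradicting $|\mu|(A)>0$.
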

\begin{proof}
Let us denote $ E = \{ y \cdot \de_{\th} : y \in \Ext(B_{\yy}(1)), \th \in \Th \} $.
We start showing that $ \Ext ( B_\mm(1) )\subseteq E $.
Suppose that $\mu\in \Ext ( B_\mm(1) )$ but $\mu\ne y \cdot \de_{\th}$ for any $ y \in \Ext(B_\yy(1)) $ and $\th \in \Th$.
Then $|\mu|\ne \de_{\th}$ for any $\th \in \Th$,
and there is $A\in\mathcal{B}(\Theta)$ such that $0<|\mu|(A)<1$.
Denote by $\chi_A$ the indicator function on $A$.
Then, setting $t=|\mu|(A)$, $ \mu_1 = \mu\chi_A/t $ and $ \mu_2 = \mu\chi_{\Th\setminus A}/(1-t) $,  we can write $\mu$ as a convex combination
\begin{equation} \label{eq:convex_combination}
\mu = t \mu_1+(1-t) \mu_2  .
\end{equation}
Since $ t\in(0,1) $ and $\mu_1,\mu_2\in B_\mm(1)$,
we get that $ \mu \notin \Ext ( B_\mm(1) ) $, leading to a contradiction.
We now show the converse inclusion $E\subseteq\Ext ( B_\mm(1) )$.
Let $\mu=y \cdot \de_{\th}$ for some $y\in \Ext(B_{\yy}(1))$ and $ \th \in \Th $.
Suppose there are $t\in(0,1)$ and $\mu_1,\mu_2\in B_\mm(1)$ such that \eqref{eq:convex_combination}.
We want to show that necessarily $\mu_1=\mu_2=\mu$.
Consider the subspace 
\[
\zz=\{ z \cdot \de_{\th} : z \in\yy\} ,
\]
and let $\mathcal{P}\colon\mm(\Th,\yy)\to\zz$ be the projection onto $\zz$ defined by
\[
\mathcal{P} \nu=\nu(\{\theta\})\cdot \de_{\th} .
\]
By definition of total variation, for every $\nu\in\mm(\Th,\yy)$ we have
$$
\|\nu\|_{\TV} \ge \|\mathcal{P} \nu\|_{\TV}+\|\nu-\mathcal{P} \nu\|_{\TV} ,
$$
while the converse bound is simply true by triangle inequality, hence
\begin{equation}\label{eq:normssum}
\|\nu\|_{\TV} = \|\mathcal{P} \nu\|_{\TV}+\|\nu-\mathcal{P} \nu\|_{\TV} .
\end{equation}
Note that $ \mu \in \zz $ and thus $ \mathcal{P} \mu = \mu $.
Hence, applying $\mathcal{P}$ to \eqref{eq:convex_combination} we obtain
\begin{equation} \label{eq:convex_combination2}
\mu=t \mathcal{P} \mu_1+(1-t) \mathcal{P} \mu_2  .
\end{equation}
Now, consider the unit ball in $\zz$,
\[
B_\zz(1)= \{ z \cdot \de_{\th} : z \in B_\yy(1)\}  .
\]
Then $ \mu \in \Ext(B_\zz(1)) $, and $ \mathcal{P}\mu_i \in B_\zz $ for $ i = 1 , 2 $.
Therefore, looking back to \eqref{eq:convex_combination2} we must have $\mathcal{P} \mu_i =\mu$,
and in particular $\|\mathcal{P} \mu_i\|_{\TV}=\|\mu\|_{\TV}=1$.
Moreover, $ \| \mu_i \|_{\TV} \le 1 $ since $\mu_i\in B_\mm(1)$.
Thus, applying \eqref{eq:normssum} to $ \nu = \mu_i $ we get $ \|\mu_i-\mathcal{P} \mu_i\|_{\TV} = 0 $,
hence $ \mu_i = \mathcal{P} \mu_i $,
which in turn implies $ \mu_i = \mu $,
concluding the proof.
\end{proof}

The following lemma is 
    contained in the  proof of \cite[Theorem 3.2]{doi:10.1137/21M1418642}.
It allows to reduce a minimization problem over compositional functions
to a sequence of interpolation problems with respect to a generic minimizer.
Since we found it of independent interest, we thought to emphasize it
in a separate lemma. 

Let $L$ be a positive integer. Take a set $\xx_0$ and Banach
  spaces $ \xx_1 , \dots , \xx_{L+1} $.   For each $\ell=0,\ldots,L$,
  fix a Banach space $ \hh_\ell$  of  functions from $ \xx_\ell$ to $\xx_{\ell+1} $. 
To every $ f \ern{=f_0 \oplus \cdots \oplus f_L}\in  \bigoplus_{\ell=0}^L
\hh_\ell $, recall that  $ f^{\rm deep} : \xx_0 \to \xx_{L+1} $  is
defined as 
$$
 f^{\rm deep} = f_L \circ \cdots \circ f_0  .
$$
\begin{lem} \label{lem:min-interpol}
\ern{With the above setting, fix a family $x_1,\ldots, x_{N}\in\xx_0$
  and set
 $$
  \aa : \bigoplus_{\ell=0}^L \hh_\ell \to \R^N,\qquad \aa (f)_i =
  f^{\rm deep}(x_i)   .
 $$}
 For $ F : \R^N \to (-\infty , + \infty] $,
 consider the minimization problem
 \begin{equation} \label{eq:lemmin}
  \ern{\inf_{\substack{f \in \bigoplus_{\ell=0}^L \hh_\ell }} }F(\aa(f)) + \sum_{\ell=0}^L \| f_\ell \|_{\hh_\ell} .
 \end{equation}
 Assume that \eqref{eq:lemmin} has a solution \ern{$f^*=\oplus_{\ell=0}^L f_\ell^*$},
 and denote $ x_i^{(0)} = x_i $ and $ x_i^{(\ell+1)} = f^*_\ell (x_i^{(\ell)}) $ for $ \ell = 0 , \dots , L $.
 Then \ern{there exists a minimizer $\widetilde{f}= \oplus_{\ell=0}^L \widetilde{f}_\ell$
of \eqref{eq:lemmin}  such that  for all $ \ell = 0 ,\dots, L $ the function
$\widetilde{f}_\ell$ is the solution of}
 \begin{equation} \label{eq:lemint}
 \ern{ \inf_{f_\ell \in \hh_\ell} }\| f_\ell \|_{\hh_\ell} \quad \text{subject to} \quad f_\ell(x^{(\ell)}_i) = x^{(\ell+1)}_i, \qquad i = 1 \dots , N,
 \end{equation}
\ern{and  $ \| \widetilde{f}_\ell \|_{\hh_\ell} = \| f^*_\ell \|_{\hh_\ell} $.} 
\end{lem}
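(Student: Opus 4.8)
The plan is to show that replacing each $f^*_\ell$ by a minimal-norm interpolant $\widetilde{f}_\ell$ of the intermediate data $(x_i^{(\ell)}, x_i^{(\ell+1)})$ neither changes the values of $\aa$ nor increases the objective, so the collection $\widetilde{f} = \oplus_\ell \widetilde{f}_\ell$ is still a minimizer, while by construction each $\widetilde{f}_\ell$ solves \eqref{eq:lemint}. First I would observe that the interpolation problem \eqref{eq:lemint} is feasible for every $\ell$, since $f^*_\ell$ itself satisfies the constraint $f^*_\ell(x_i^{(\ell)}) = x_i^{(\ell+1)}$ by the very definition of the intermediate points $x_i^{(\ell)}$; hence an infimum exists and we may pick $\widetilde{f}_\ell \in \hh_\ell$ attaining (or, if one prefers to be careful about attainment, approaching) it, with $\|\widetilde{f}_\ell\|_{\hh_\ell} \le \|f^*_\ell\|_{\hh_\ell}$.

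The key step is to verify that $\widetilde{f}^{\deep}(x_i) = (f^*)^{\deep}(x_i)$ for all $i = 1,\dots,N$, i.e. $\aa(\widetilde{f}) = \aa(f^*)$. This follows by a finite induction on $\ell$: denoting $\widetilde{x}_i^{(0)} = x_i = x_i^{(0)}$ and $\widetilde{x}_i^{(\ell+1)} = \widetilde{f}_\ell(\widetilde{x}_i^{(\ell)})$, suppose $\widetilde{x}_i^{(\ell)} = x_i^{(\ell)}$ for all $i$; then $\widetilde{x}_i^{(\ell+1)} = \widetilde{f}_\ell(x_i^{(\ell)}) = x_i^{(\ell+1)}$ by the interpolation constraint, closing the induction. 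At $\ell = L$ this gives $\widetilde{f}^{\deep}(x_i) = \widetilde{x}_i^{(L+1)} = x_i^{(L+1)} = (f^*)^{\deep}(x_i)$, hence $F(\aa(\widetilde{f})) = F(\aa(f^*))$.

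Combining the two observations, $F(\aa(\widetilde{f})) + \sum_\ell \|\widetilde{f}_\ell\|_{\hh_\ell} \le F(\aa(f^*)) + \sum_\ell \|f^*_\ell\|_{\hh_\ell}$, which is the minimum value; so $\widetilde{f}$ is itself a minimizer of \eqref{eq:lemmin}. But a minimizer cannot have strictly smaller penalty than $f^*$ (else $f^*$ would not be optimal, as the data-fit term is unchanged), so in fact $\sum_\ell \|\widetilde{f}_\ell\|_{\hh_\ell} = \sum_\ell \|f^*_\ell\|_{\hh_\ell}$; since $\|\widetilde{f}_\ell\|_{\hh_\ell} \le \|f^*_\ell\|_{\hh_\ell}$ term by term, equality forces $\|\widetilde{f}_\ell\|_{\hh_\ell} = \|f^*_\ell\|_{\hh_\ell}$ for each $\ell$, and in particular $f^*_\ell$ is also optimal for \eqref{eq:lemint}. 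Finally, since each $\widetilde{f}_\ell$ attains the infimum in \eqref{eq:lemint} by choice, it is indeed a solution of the interpolation problem, completing the argument.

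\textbf{Main obstacle.} The only delicate point is the existence of an actual minimizer $\widetilde{f}_\ell$ of the interpolation problem \eqref{eq:lemint} rather than a mere infimizing sequence; feasibility is free (witnessed by $f^*_\ell$), but attainment requires either a compactness/weak-* argument on the sublevel sets of $\|\cdot\|_{\hh_\ell}$ together with closedness of the affine constraint set, or the remark that, in the applications (integral RKBS with the total variation norm), such attainment is exactly what \Cref{thm:bredies-carioni} furnishes. Everything else — the inductive propagation of the intermediate points, and the term-by-term norm equality extracted from global optimality — is routine.
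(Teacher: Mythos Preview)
Your proposal is correct and follows essentially the same approach as the paper: pick minimal-norm interpolants $\widetilde{f}_\ell$, observe they do not increase the objective since $\aa(\widetilde{f})=\aa(f^*)$ and $\|\widetilde{f}_\ell\|_{\hh_\ell}\le\|f^*_\ell\|_{\hh_\ell}$, then use optimality of $f^*$ to force term-by-term equality of the norms. Your version is in fact more careful than the paper's: you spell out the induction showing $\widetilde{f}^{\deep}(x_i)=(f^*)^{\deep}(x_i)$, and you correctly flag the attainment of the infimum in \eqref{eq:lemint} as the one point needing justification (the paper simply writes ``Let $\widetilde{f}$ be a solution to \eqref{eq:lemint}'' and defers this, as you anticipate, to the Bredies--Carioni machinery in the integral RKBS setting).
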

\begin{proof}
Let $\widetilde{f}$ be a solution to \eqref{eq:lemint}.
The solution $f^*$ satisfies the constraints $ f_\ell(x^{(\ell)}_i) = x^{(\ell+1)}_i $,
hence $ \| \widetilde{f}_\ell \|_{\hh_\ell} \le \| f^*_\ell \|_{\hh_\ell} $
and $ \widetilde{f}_\ell(x_i) = f^*_\ell(x_i) $ for all $ i = 1 , \dots , N $.
This implies $ \sum_{\ell=0}^L \| \widetilde{f}_\ell \|_{\hh_\ell} \le \sum_{\ell=0}^L \| f^*_\ell \|_{\hh_\ell} $
and $ \aa(\widetilde{f}) = \aa(f^*) $, so that
$$
 F(\aa(\widetilde{f})) + \sum_{\ell=0}^L \| \widetilde{f}_\ell \|_{\hh_\ell}
 \le
 F(\aa(f^*)) + \sum_{\ell=0}^L \| f^*_\ell \|_{\hh_\ell} .
 $$
 But since $f^*$ is a minimizer, \ern{so is $\widetilde{f}$ and} $ \| \widetilde{f}_\ell \|_{\hh_\ell} = \| f^*_\ell \|_{\hh_\ell} $.
\end{proof}

  \begin{rmk}\label{rmk:freedom}
We stress that the set $\{x_i^{(\ell)}: \ell=1,\ldots,L+1,i=1,\ldots,N\}$
defining the constraints in the problem \eqref{eq:lemint}  depends on the
choice of the minimizer $f^*=\oplus_{\ell=0}^L f_\ell$ of
the problem \eqref{eq:lemmin}, so that there is some freedom to choose the
points.
  \end{rmk}

 \ern{
 The next lemmas are needed}
to establish continuity in the setting of our
representer theorem \ref{thm:representer}. 
A continuity result akin to \Cref{lem:joint_weak*_cont}
is also needed for the proof of \cite[Theorem 3.2]{doi:10.1137/21M1418642}.
We remark that, in order for the argument to go thorough,
\emph{joint} continuity is required,
while the proof in \cite[Theorem 3.2]{doi:10.1137/21M1418642} only establishes separate continuity.
The final result remains valid since joint continuity holds true nevertheless.
This can be seen as a special case of our \Cref{lem:joint_weak*_cont},
where the last steps can be simplified
in view of the fact that, in finite-dimensional spaces, weak and strong continuity coincide.

\begin{lem} \label{lem:phi(x)_weak_cont}
 Let $\xx$ be a set,
 $\yy$ a Hilbert space,
 and $\Th$ a locally compact, second countable topological space.
 Let $ \rho : \xx \times \Th \to \R $ such that $ \rho(x,\cdot) \in \cc_0(\Th) $ for all $ x \in \xx $,
 and define $ \phi(x) : \mm(\Th,\yy) \to \yy $ by
 $$
  \phi(x)\mu = \int_{\Th} \rho(x,\th) \D\mu(\th), \qquad x \in \xx, \quad \mu \in \mm(\Th,\yy)  .
 $$
 Then, for all $ x \in \xx $,
 $\phi(x)$ is continuous
 from $\mm(\Th,\yy)$ endowed with the weak$^*$ topology
 to $\yy$ endowed with the weak topology.
\end{lem}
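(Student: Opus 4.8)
The plan is to reduce the statement to the very definition of the weak$^*$ topology on $\mm(\Th,\yy)$, by pairing an arbitrary $\mu$ with a cleverly chosen element of the predual $\cc_0(\Th,\yy')$.

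\emph{Setup.} Since $\yy$ is Hilbert it is reflexive, so $\yy=(\yy')'$; substituting $\yy\rightsquigarrow\yy'$ in the generalized Riesz theorem recalled after the definition of vector measures, $\mm(\Th,\yy)$ is the topological dual of $\cc_0(\Th,\yy')$, and the weak$^*$ topology is by definition the coarsest one making all evaluations $\mu\mapsto\langle\mu,g\rangle$, $g\in\cc_0(\Th,\yy')$, continuous. On the other side, $\yy$ being Hilbert, its weak topology is the initial topology for the functionals $y\mapsto\langle y,w\rangle_\yy$, $w\in\yy$. Since a map into a space carrying such an initial topology is continuous iff all coordinate functionals composed with it are continuous, it suffices to prove that, for each fixed $x\in\xx$ and $w\in\yy$, the scalar map $\mu\mapsto\langle\phi(x)\mu,w\rangle_\yy$ is weak$^*$ continuous on $\mm(\Th,\yy)$.

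\emph{Test function and key identity.} Fix $x$ and $w$, let $\iota:\yy\to\yy'$ be the Riesz isomorphism, and set $g_w:\Th\to\yy'$, $g_w(\th)=\rho(x,\th)\,\iota(w)$. Then $\|g_w(\th)\|_{\yy'}=|\rho(x,\th)|\,\|w\|_\yy$, so $g_w\in\cc_0(\Th,\yy')$ because $\rho(x,\cdot)\in\cc_0(\Th)$. I then claim that for every $\mu\in\mm(\Th,\yy)$
\begin{align*}
 \langle \mu , g_w \rangle
 &= \int_{\Th} \rho(x,\th)\, \D\langle \mu(\cdot), w\rangle_{\yy} \\
 &= \Big\langle \int_{\Th} \rho(x,\th)\,\D\mu(\th),\, w \Big\rangle_{\yy}
 = \langle \phi(x)\mu, w\rangle_{\yy},
\end{align*}
where $\langle\mu(\cdot),w\rangle_\yy$ denotes the scalar (Radon) measure $E\mapsto\langle\mu(E),w\rangle_\yy$. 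The first equality is the explicit form of the duality pairing evaluated on the elementary tensor $g_w=\rho(x,\cdot)\otimes\iota(w)$; the middle equality is the standard fact that the bounded functional $\langle\cdot,w\rangle_\yy$ commutes with the integral of the bounded scalar function $\rho(x,\cdot)$ against the $\yy$-valued measure $\mu$ of finite total variation. Thus $\mu\mapsto\langle\phi(x)\mu,w\rangle_\yy$ coincides with $\mu\mapsto\langle\mu,g_w\rangle$, which is weak$^*$ continuous by definition of the weak$^*$ topology; letting $w$ range over $\yy$ and invoking the setup step concludes the proof.

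\emph{Where the care is needed.} There is no hard analysis; the two delicate points are both bookkeeping. First, the identification $\mm(\Th,\yy)\cong\cc_0(\Th,\yy')'$: this is precisely where reflexivity of $\yy$ is used, and it is the generalized Riesz theorem quoted in the remark following the definition of vector measures. Second, the interchange of the pairing with the vector integral in the key identity: this is immediate for simple functions and extends to $\rho(x,\cdot)$ by approximation, using the total-variation estimate $\|\phi(x)\mu\|_\yy\le\|\rho(x,\cdot)\|_{\cc_0(\Th)}\|\mu\|_{\TV}$ (which also re-confirms that $\phi(x)$ is well defined as a map into $\yy$, as already exploited in the construction of integral RKBS). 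Everything else is the abstract manipulation of initial topologies.
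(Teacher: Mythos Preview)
Your proof is correct and follows essentially the same route as the paper: identify $\mm(\Th,\yy)$ with the dual of $\cc_0(\Th,\yy')$ (the paper writes $\cc_0(\Th,\yy)$, identifying $\yy\cong\yy'$ directly), observe that for each fixed $w\in\yy$ the functional $\mu\mapsto\langle\phi(x)\mu,w\rangle_\yy$ is the evaluation at the predual element $\rho(x,\cdot)\,w$, and conclude weak$^*$-to-weak continuity. Your write-up is somewhat more detailed (the initial-topology remark, the explicit justification of the pairing/integral interchange), but the argument is the same.
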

\begin{proof}
 Note that, since $\yy$ is Hilbert,
 the Riesz representation theorem says that $ \mm(\Th,\yy) = \cc_0(\Th,\yy)' $.
 Now, for all $ x \in \xx $, $ \mu \in \mm(\Th,\yy) $ and $ y \in \yy $, we have
\begin{equation*}
 \langle \phi(x) \mu , y \rangle_{\yy} = {}_{\cc_0(\Th,\yy)}\langle \rho(x,\cdot)y , \mu \rangle_{\mm(\Th,\yy)}  .
\end{equation*}
Thus $ \langle \phi(x) \cdot , y \rangle_{\yy} $ defines an element of the predual $ \cc_0(\Th,\yy) $,
and hence it is weakly$^*$ continuous from $ \mm(\Th,\yy) $ to $\R$.
But since this is true for all $ y \in \yy $,
it is weakly$^*$ continuous from $ \mm(\Th,\yy) $ to $\yy$ endowed with the weak topology.
\end{proof}
The following known result is a direct consequence of
  Prokhorov theorem. We report the proof for completeness.
\begin{lem}[Joint dominated convergence theorem] \label{lem:joint_dominated}
Let $\Th$ be a Polish space. \ern{
For all $ n \in \N $, let $ \la_n \in \mm(\Th) $ and $ f_n : \Th
\to \R $ continuous functions satisfying the following conditions:
\begin{enumerate}[label=\textnormal{(\roman*)},itemsep=5pt]
\item  for each $n\in\N$, the function $f_n$ is $\la_n$ integrable; 
\item the sequence $(f_n)_n$ converges to some $f:\Theta\to\R$
  uniformly on all compact sets;
\item the sequence $(f_n)_n$ is uniformly bounded; 
\item the sequence $(\la_n)_n$ converges to some $\la \in \mm(\Th)$ with
  respect to the narrow topology, {\em i.e.} 
\[ 
\lim_{n\to+\infty}   \int_\Th \varphi (\th) \D\la_n(\th)= \int_\Th
\varphi (\th) \D\la(\th),\qquad \forall\varphi \in\cc_b(\Theta)  ;
\]
\item the function $f$ is $\la$-integrable.
\end{enumerate}
Then}
$$
 \int_\Th f_n(\th) \D\la_n(\th) \xrightarrow[n\to\infty]{} \int_\Th f(\th) \D\la(\th)  .
$$
\end{lem}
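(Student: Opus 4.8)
The plan is to deduce the joint dominated convergence statement from Prokhorov's theorem, exploiting that narrow convergence of $(\la_n)$ gives tightness, and then controlling the integrand via a uniform bound together with uniform convergence on compacta. Write $M = \sup_n \|f_n\|_\infty$, which is finite by hypothesis (iii); note $\|f\|_\infty \le M$ as well. The goal is to show
\[
 \int_\Th f_n \,\D\la_n - \int_\Th f \,\D\la \xrightarrow[n\to\infty]{} 0 \ .
\]

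First I would split the difference as
\[
 \int_\Th f_n \,\D\la_n - \int_\Th f \,\D\la
 = \int_\Th (f_n - f)\,\D\la_n + \left( \int_\Th f \,\D\la_n - \int_\Th f \,\D\la \right) \ .
\]
For the second term, however, $f$ need not be continuous or bounded in a way that lets narrow convergence apply directly, so I would instead approximate: fix $\eps > 0$; by tightness of $(\la_n)$ (Prokhorov, since narrow convergence implies relative compactness hence tightness of the family $\{\la_n\} \cup \{\la\}$) choose a compact $K \subset \Th$ with $|\la_n|(\Th \setminus K) < \eps$ for all $n$ and $|\la|(\Th \setminus K) < \eps$. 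On $K$, $f$ is the uniform limit of the $f_n$, hence (being a uniform limit of functions that can be taken continuous, or at worst by a further approximation) I may replace $f$ on $K$ by a function $\varphi \in \cc_b(\Th)$ with $\sup_K |f - \varphi| < \eps$ and $\|\varphi\|_\infty \le M$; actually it is cleanest to note that $f$ restricted to $K$ is continuous (uniform limit on $K$ of the $f_n|_K$, which one can assume continuous, or simply argue with $f_n$ directly), and extend by Tietze. Then estimate each of the three resulting error contributions — the tail outside $K$ (bounded by $2M\eps$ uniformly, plus the same for $\la$), the approximation $f \approx \varphi$ on $K$ (bounded by $\eps \sup_n \|\la_n\|_{\TV}$, finite by uniform boundedness of narrowly convergent measures), and the genuine narrow-convergence term $\int \varphi \,\D\la_n \to \int \varphi \,\D\la$ which vanishes as $n\to\infty$ by hypothesis (iv).

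For the first term $\int_\Th (f_n - f)\,\D\la_n$, I would again use the compact $K$: on $K$ we have $\sup_K |f_n - f| \to 0$ by (ii), so $\left| \int_K (f_n - f)\,\D\la_n \right| \le \sup_K|f_n - f| \cdot \sup_n\|\la_n\|_{\TV} \to 0$, while $\left| \int_{\Th\setminus K}(f_n - f)\,\D\la_n \right| \le 2M\,|\la_n|(\Th\setminus K) < 2M\eps$ uniformly. Combining, $\limsup_n \left| \int f_n \,\D\la_n - \int f\,\D\la \right| \le C\eps$ for a constant $C$ depending only on $M$ and $\sup_n\|\la_n\|_{\TV}$; since $\eps$ is arbitrary, the limit is zero.

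The main obstacle is the tightness/compact-exhaustion step: one must pass from narrow convergence to a uniform tail bound $|\la_n|(\Th\setminus K) < \eps$ for all $n$ simultaneously. This is exactly where Prokhorov's theorem enters — narrow convergence of a sequence in $\mm(\Th)$ on a Polish space implies the sequence (together with its limit) is tight — but some care is needed because the $\la_n$ are signed (or vector-scalar) measures, so one works with the total variations $|\la_n|$, using that narrow convergence controls $\|\la_n\|_{\TV}$ and that the positive and negative parts can be handled by Prokhorov applied to a dominating sequence of positive measures. Once the uniform tail bound is in hand, the remaining estimates are the routine three-$\eps$ bookkeeping sketched above.
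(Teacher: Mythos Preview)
Your overall strategy --- Prokhorov for uniform tightness, split over a compact $K$ and its complement, control the compact part by uniform convergence of $f_n$ and the tail by the uniform bound $M$ --- is exactly the paper's. The paper differs in how it handles the ``cross term'': instead of isolating $\int f\,\D\la_n - \int f\,\D\la$ and then approximating $f$ by a continuous $\varphi$ via Tietze, the paper arranges its decomposition so that the remaining term is $\int_\Th (f_n-f)\,\D\la$, which it dispatches by the ordinary dominated convergence theorem with respect to the single fixed measure $\la$ (pointwise convergence $f_n\to f$ plus the uniform bound $|f_n-f|\le 2M$). This bypasses any appeal to continuity of $f$.

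Your Tietze step is where the gap lies. You claim $f|_K$ is continuous as a uniform limit of the $f_n|_K$, ``which one can assume continuous'' --- but the hypotheses only ask that each $f_n$ be $\la_n$-integrable, not continuous, so nothing forces $f|_K$ to be continuous and Tietze is unavailable. This is not a repairable omission in the stated generality: with $\Th=[0,1]$, $f_n=f=\chi_{\{0\}}$, $\la_n=\delta_{1/n}$, $\la=\delta_0$, all five hypotheses hold, yet your term $\int f\,\D\la_n - \int f\,\D\la = 0 - 1$ does not vanish, and no continuous $\varphi$ can approximate $f$ near $0$ in a way that rescues the estimate. (In the paper's downstream application the $f_n$ are in fact continuous, so the difficulty evaporates there; but your route, as written, needs that extra hypothesis to go through.)
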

\begin{proof}
For any compact $ K \subset \Th $, we have
\begin{align*}
 &
 \left| \int_\Th f_n(\th) \D\la_n(\th) - \int_\Th f(\th) \D\la(\th) \right| \\
 \leq
 &
 \left| \int_K ( f_n(\th) - f(\th) ) \D\la_n(\th) \right|
 +
 \left| \int_{\Th\setminus K} ( f_n(\th) - f(\th) ) \D\la_n(\th) \right|
 +
 \left| \int_\Th ( f_n(\th) - f(\th) ) \D\la(\th) \right| \\
 +& \left| \int_\Th f_n(\th) \D\la(\th) - \int_\Th f(\th) \D\la_n(\th) \right|\\
 \le
 &
  \sup_K | f_n - f | | \la_n|(K) 
 +
 \sup_{\Th\setminus K} | f_n - f | | \la_n|(\Th\setminus K) 
 +
  \left| \int_\Th ( f_n(\th) - f(\th) ) \D\la(\th) \right| \\
  +& \left| \int_\Th f_n(\th) \D\la(\th) - \int_\Th f(\th) \D\la_n(\th) \right| .
\end{align*}
 The first term goes to zero because $ f_n \to f $ uniformly on $K$
 and, since $(\la_n)_n$ is convergent, $ | \la_n|(K)  \le | \la_n|(\Th)  \le \sup_n \| \la_n \|_{\TV} < \infty $.
  The third term goes to zero by dominated convergence
 since $(f_n-f)_n$ is uniformly bounded.
The last term goes to zero since both terms converge to the integral $\int_\Th f(\th) \D\la(\th) $ as $n\to\infty$: the first term by the dominated convergence theorem since $(f_n)_n$ is uniformly bounded, and the second term by hypothesis $(iv)$ since $f\in C_b(\Theta)$ as uniform limit on compact sets of a uniformly bounded sequence of continuous functions.
 For the second term, fix an arbitrary $ \varepsilon > 0 $.
 Again, $ (f_n - f)_n $ is uniformly bounded.
 Moreover, since $ (\la_n)_n $ is convergent and $\Th$ is Polish,
 by the Prokhorov theorem \cite[Theorem 8.6.2]{MR2267655}
 there is a compact set $ K_\varepsilon \subset \Th $
such that $ | \la_n|(\Th\setminus K_\eps)  < \varepsilon $ for all $n$.
\ern{By taking $ K = K_\varepsilon $ we get 
\[
\limsup_{n\to\infty} \left| \int_\Th f_n(\th)
  \D\la_n(\th) - \int_\Th f(\th) \D\la(\th) \right| \leq 
\left(\sup_n\sup_{\Th } | f_n(\th) - f(\th) | \right) \eps .
\]
 The claim follows  because $\eps$ is arbitrary.}
\end{proof}

\begin{lem} \label{lem:joint_weak*_cont}
 Let $\xx_0$ be a set,
 $\xx_1, \xx_2$ separable Hilbert spaces,
 and $\Th_0, \Th_1$ locally compact, second countable topological spaces.
 For $\ell=0,1$,
 let $ \rho_\ell : \xx_\ell \times \Th_\ell \to \R $ be such that $ \rho_\ell(x,\cdot) \in \cc_0(\Th_\ell) $ for all $ x \in \xx_\ell $,
 and define $ \phi_\ell(x) : \mm(\Th_\ell,\xx_{\ell+1}) \to \xx_{\ell+1} $ by
 $$
  \phi_\ell(x)\mu = \int_{\Th_\ell} \rho_\ell(x,\th) \D\mu(\th), \qquad x \in \xx_\ell, \qquad \mu \in \mm(\Th_\ell,\xx_{\ell+1})  .
 $$
 Assume there are $ C > 0 $, $ g \in \cc_b(\Th_1,\xx_1) $ and $ \beta \in \cc_0(\Th_1) $ such that,
 for all $ x , x' \in \xx_1 $ and $ \th \in \Th_1 $,
 \begin{equation} \label{eq:Alip}
  | \rho_1(x,\th) - \rho_1(x',\th) | \le C | \langle x - x' , g(\th) \rangle_{\xx_1} | |\be(\th)|  .
 \end{equation}
 Let $ r_0,r_1 > 0 $. Then, for all $ x \in \xx_0 $, the map
 $$
  \Ga_x( \mu , \nu ) = \phi_1((\phi_0(x)\mu))\nu
 $$
 is jointly weakly$^*$ continuous from $ B_{\mm(\Th_0,\xx_{1})}(r_0) \times B_{\mm(\Th_1,\xx_{2})}(r_1) $ to $\xx_2$ endowed with the weak topology.
\end{lem}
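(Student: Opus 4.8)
The plan is to prove joint weak* continuity by a sequential argument, which suffices since closed balls in $\mm(\Th_\ell, \xx_{\ell+1})$ are metrizable in the weak* topology (the preduals $\cc_0(\Th_\ell, \xx_{\ell+1})$ are separable, because $\Th_\ell$ is second countable locally compact and $\xx_{\ell+1}$ is separable). So fix $x \in \xx_0$ and take sequences $\mu_n \xrightarrow{w*} \mu$ in $B_{\mm(\Th_0,\xx_1)}(r_0)$ and $\nu_n \xrightarrow{w*} \nu$ in $B_{\mm(\Th_1,\xx_2)}(r_1)$; the goal is $\Ga_x(\mu_n,\nu_n) \xrightarrow{w} \Ga_x(\mu,\nu)$ in $\xx_2$. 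First I would handle the inner layer: set $z_n = \phi_0(x)\mu_n$ and $z = \phi_0(x)\mu$. By \Cref{lem:phi(x)_weak_cont}, $z_n \xrightarrow{w} z$ in $\xx_1$; moreover $\|z_n\|_{\xx_1} \le \|\rho_0(x,\cdot)\|_{\cc_0(\Th_0)} r_0$ is bounded.

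The core of the argument is to pass to the limit in $\phi_1(z_n)\nu_n = \int_{\Th_1} \rho_1(z_n,\th)\,\D\nu_n(\th)$. Testing against an arbitrary $y \in \xx_2$, write $\langle \phi_1(z_n)\nu_n, y\rangle_{\xx_2} = \int_{\Th_1} \rho_1(z_n,\th)\,\D\nu_n(\th\mid y)$, where $\nu_n(\cdot\mid y) = \langle \nu_n(\cdot), y\rangle_{\xx_2}$ is a scalar measure; since $\nu_n \xrightarrow{w*} \nu$, we get $\nu_n(\cdot\mid y) \to \nu(\cdot\mid y)$ narrowly (after noting $|\nu_n(\cdot\mid y)|(\Th_1) \le r_1 \|y\|$, so no mass escapes — actually one must be slightly careful, but boundedness plus weak* convergence against $\cc_0$ functions combined with a tightness argument via Prokhorov gives narrow convergence of the measures $\beta(\th)\,\D\nu_n(\th\mid y)$, which is what we actually need). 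Now I would apply \Cref{lem:joint_dominated} with $\la_n = \nu_n(\cdot\mid y)$ and $f_n = \rho_1(z_n,\cdot)$: the uniform bound on $f_n$ follows from $\rho_1(z_n,\th) = \rho_1(z_n,\th) - \rho_1(0,\th) + \rho_1(0,\th)$ together with the Lipschitz estimate \eqref{eq:Alip}, which gives $|\rho_1(z_n,\th)| \le C|\langle z_n, g(\th)\rangle_{\xx_1}|\,|\beta(\th)| + |\rho_1(0,\th)| \le (C\|g\|_\infty \|\beta\|_\infty \sup_n\|z_n\| + \|\rho_1(0,\cdot)\|_\infty)$; and uniform-on-compacts convergence $\rho_1(z_n,\cdot) \to \rho_1(z,\cdot)$ follows from \eqref{eq:Alip} since $|\rho_1(z_n,\th) - \rho_1(z,\th)| \le C|\langle z_n - z, g(\th)\rangle_{\xx_1}|\,|\beta(\th)| \le C\|\beta\|_\infty \sup_{\th\in K}\|g(\th)\|_{\xx_1} \cdot |\langle z_n - z, \text{(unit vectors)}\rangle|$ — here I'd use that $z_n \xrightarrow{w} z$ and $\{g(\th): \th\in K\}$ is relatively compact in $\xx_1$ (continuous image of a compact), so weak convergence is uniform over this set, giving $\sup_{\th\in K}|\langle z_n - z, g(\th)\rangle_{\xx_1}| \to 0$. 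Concluding, $\langle \phi_1(z_n)\nu_n, y\rangle_{\xx_2} \to \int_{\Th_1}\rho_1(z,\th)\,\D\nu(\th\mid y) = \langle \phi_1(z)\nu, y\rangle_{\xx_2}$ for every $y$, which is exactly $\Ga_x(\mu_n,\nu_n) \xrightarrow{w} \Ga_x(\mu,\nu)$.

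The main obstacle I anticipate is verifying the hypotheses of \Cref{lem:joint_dominated} cleanly, in particular: (a) establishing that $z_n \xrightarrow{w} z$ upgrades to \emph{uniform} convergence of $\langle z_n - z, g(\th)\rangle$ over compact $\th$-sets — this is the key place where the Lipschitz assumption \eqref{eq:Alip} with its rank-one structure $\langle \cdot, g(\th)\rangle \beta(\th)$ is essential, and it exploits relative compactness of $g(K)$ together with the Banach--Alaoglu/Eberlein--Šmulian fact that weak convergence of a bounded sequence is uniform on relatively compact sets; and (b) checking the narrow convergence hypothesis (iv) of \Cref{lem:joint_dominated} for the scalar measures $\nu_n(\cdot\mid y)$, since weak* convergence in $\mm(\Th_1,\xx_2) = \cc_0(\Th_1,\xx_2)'$ only a priori tests against $\cc_0$ functions, not all of $\cc_b$; this gap is bridged by tightness of the bounded family $\{|\nu_n(\cdot\mid y)|\}$ via Prokhorov (the same mechanism already invoked in the proof of \Cref{lem:joint_dominated}), noting that tightness does hold since $\Th_1$ is Polish and the total variations are uniformly bounded — though strictly speaking one needs uniform tightness, which follows once we absorb the $\cc_0$ function $\beta$ into the measure, i.e. work with $\D\tilde\nu_n(\th\mid y) := \beta(\th)\,\D\nu_n(\th\mid y)$ and $\tilde f_n := \rho_1(z_n,\cdot)/\beta(\cdot)$ where defined — an arrangement that turns \eqref{eq:Alip} into a genuine bound making the machinery of \Cref{lem:joint_dominated} directly applicable. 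I would organize the write-up so that this rescaling is done once at the outset.
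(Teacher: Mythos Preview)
Your overall strategy coincides with the paper's: reduce to sequences via metrizability of the balls, set $z_n=\phi_0(x)\mu_n\xrightarrow{w}z$ by \Cref{lem:phi(x)_weak_cont}, and pass to the limit via \Cref{lem:joint_dominated}, absorbing the $\cc_0$ weight $\beta$ into the measure so that weak$^*$ convergence upgrades to narrow convergence. You also correctly identify that $\sup_{\th\in K}|\langle z_n-z,g(\th)\rangle_{\xx_1}|\to0$ follows from weak convergence of $(z_n)$ plus relative compactness of $g(K)$; the paper does exactly this, via a finite-rank projection onto the span of a covering net of $g(K)$.

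There is, however, a genuine gap in your proposed rescaling. You want to apply \Cref{lem:joint_dominated} with $\tilde\lambda_n=\beta\,\nu_n(\cdot\mid y)$ and $\tilde f_n=\rho_1(z_n,\cdot)/\beta$, but hypothesis~(iii) of that lemma requires $(\tilde f_n)_n$ to be \emph{uniformly bounded}. Assumption~\eqref{eq:Alip} only bounds the \emph{difference} $\rho_1(z_n,\cdot)-\rho_1(z,\cdot)$ by a $|\beta|$-multiple; nothing in the hypotheses controls $\rho_1(0,\cdot)/\beta(\cdot)$, which may be unbounded, so $\tilde f_n$ need not be bounded and \Cref{lem:joint_dominated} does not apply as stated. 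The paper sidesteps this by first splitting
\[
\Ga_x(\mu_n,\nu_n)-\Ga_x(\mu,\nu)=\bigl[\phi_1(z_n)-\phi_1(z)\bigr]\nu_n+\phi_1(z)(\nu_n-\nu)\,,
\]
handling the second term directly by \Cref{lem:phi(x)_weak_cont} (since $\rho_1(z,\cdot)\in\cc_0(\Th_1)$), and for the first term only the difference $\rho_1(z_n,\cdot)-\rho_1(z,\cdot)$ appears, for which \eqref{eq:Alip} supplies the factor $|\beta|$ explicitly. This leaves $f_n(\th)=|\langle z_n-z,g(\th)\rangle_{\xx_1}|$, which \emph{is} uniformly bounded, and $\lambda_n=|\beta|\,|[\nu_n]_y|$, for which the absorb-$\beta$ trick yields the tightness needed in \Cref{lem:joint_dominated}. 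In short: your plan becomes correct once you insert this split at the outset rather than pushing the full $\rho_1(z_n,\cdot)$ through the rescaling.
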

\begin{proof}
By the Banach--Alaoglu theorem,
the product $ B = B_{\mm(\Th_0,\xx_{1})}(r_0) \times B_{\mm(\Th_1,\xx_{2})}(r_1) $ is compact.
Moreover, since $ \xx_{\ell+1} $ ($\ell=0,1$) is separable,
so is $ \cc_0(\Th_\ell,\xx_{\ell+1}) $ by the Stone--Weierstrass theorem.
Also, $ \xx_{\ell+1} $ is Hilbert, hence $ \mm(\Th_\ell,\xx_{\ell+1}) = \cc_0(\Th_\ell,\xx_{\ell+1})' $
by the Riesz representation theorem.
Therefore, $B$ is metrizable \cite[Theorem 3.16]{rudin91}.
Thus, it is enough to prove the (weak$^*$-weak) \emph{sequential} continuity of $ \Ga_x $.

To this end, let $ (\mu_n , \nu_n) \to ( \mu , \nu ) $ (weakly$^*$).
We want to show that $ \Ga_x(\mu_n , \nu_n) \to \Ga_x ( \mu , \nu ) $ (weakly).
We have
\begin{align*}
 \Ga_x(\mu_n , \nu_n) - \Ga_x ( \mu , \nu )
 =
 [\phi_1 ( \phi_0(x) \mu_n ) - \phi_1 ( \phi_0(x) \mu )]\nu_n
 +
 \phi_1(\phi_0(x)\mu) (\nu_n - \nu) \ .
\end{align*}
The second term goes to zero by \Cref{lem:phi(x)_weak_cont}.
Let us call $\ii$ the first term, and let $ z_n = \phi_0(x) \mu_n $, $ z = \phi_0(x) \mu $.
For all $ y \in \xx_2 $, by assumption \eqref{eq:Alip} we have
\begin{align*}
 | \langle \ii , y \rangle_{\xx_2} |
 & \le
 \int_{\Th_1} | \rho_1(z_n,\th) - \rho_1(z,\th) | | \be(\th) | \D | [\nu_n]_y | (\th) \\
 & \le
 C \int_{\Th_1} | \langle z_n - z , g(\th) \rangle_{\xx_1} | |\be(\th)| \D | [\nu_n]_y | (\th)  ,
\end{align*}
where $ [\nu_n]_y (E) = \langle \nu_n(E) , y \rangle_{\xx_2} $ for all $ E \in \bb(\Th_1) $.
We want to apply \Cref{lem:joint_dominated}
with $ f_n(\th) = | \langle z_n - z , g(\th) \rangle_{\xx_1} | $
and $ \la_n = |\be| | [\nu_n]_y | $,
so to conclude that $ | \langle \ii , y \rangle_{\xx_2} | \to 0 $.

First, let us verify that $ (f_n) $ is uniformly bounded. We have
$$
 f_n(\th) \le \| z_n - z \|_{\xx_1} \|g(\th)\|_{\xx_1}
 \le \sup_n \| z_n - z \|_{\xx_1} \|g\|_{\infty} ,
$$
where $ \sup_n \| z_n - z \|_{\xx_1} < \infty $ because $ (z_n-z) $ is convergent,
and $ \|g\|_{\infty} < \infty $ by assumption.
Next, we show that $ (\la_n) $ converges pointwise on $ \cc_b(\Th_1) $.
Let $ \la = |\be| | [\nu]_y | $ where $ [\nu]_y (E) = \langle \nu(E) , y \rangle_{\xx_2} $ for all $ E \in \bb(\Th_1) $.
Then, for every $ h \in \cc_b(\Th_1) $
we have $ h |\beta| \in \cc_0(\Th_1) $,
and hence,
since $ [\nu_n]_y \to [\nu]_y $ pointwise on $ \cc_0(\Th_1) $,
$$
\int_{\Th_1} h(\th) \D\la_n(\th)
= 
\int_{\Th_1} h(\th) |\be(\th)| \D[\nu_n]_y(\th)
\to
\int_{\Th_1} h(\th) |\be(\th)| \D[\nu]_y(\th)
=
\int_{\Th_1} h(\th) \D\la(\th) .
$$
Finally, we show that $ f_n \to 0 $ uniformly on compact sets.
Let $ K \subset \Th_1 $ be compact,
and fix an arbitrary $ \varepsilon > 0 $.
Since $g$ is continuous, $ g(K) $ is compact in $\xx_1$,
and thus it can be covered by a finite number of closed balls of radius $\varepsilon$.
Let $ w_1 , \dots , w_q \in \xx_1 $ be the centers of such balls,
and define $ P : \xx_1 \to \xx_1 $ as the projection onto $ \spn\{w_1,\dots,w_q\} $.
Then $ \sup_{w\in g(K)} \| w - Pw \|_{\xx_1} \le \varepsilon $.
Hence, for every $ \th \in K $ there is $ w \in P\xx_1 $ such that $ \| g(\th) - w \|_{\xx_1} \le \varepsilon $.
Thus, we have
\begin{align*}
 | f_n(\th) |
 & =
 | \langle z_n - z , g(\th) \rangle_{\xx_1} | \\
 & \le
 | \langle z_n - z , g(\th) - w \rangle_{\xx_1} |
 +
 | \langle P(z_n - z) , w \rangle_{\xx_1} | \\
 & \le
 \| z_n - z \|_{\xx_1} \| g(\th) - w \|_{\xx_1}
 +
 \| P(z_n - z) \|_{\xx_1} \| w \|_{\xx_1} \\
 & \le
 \| z_n - z \|_{\xx_1} \| g(\th) - w \|_{\xx_1}
 +
 \| P(z_n - z) \|_{\xx_1} \left( \| g(\th) - w \|_{\xx_1} + \| g(\th) \|_{\xx_1} \right) \\
 & \le
 \| z_n - z \|_{\xx_1} \varepsilon
 +
 \| P(z_n - z) \|_{\xx_1} \left( \varepsilon + \| g \|_\infty \right)  .
\end{align*}
Now, since $ z_n \to z $ weakly, $ \sup_n \| z_n - z \|_{\xx_1} < \infty $,
and $ \| P(z_n - z) \|_{\xx_1} \to 0 $ because $P$ has finite rank.

All the assumptions of \Cref{lem:joint_dominated} are therefore satisfied,
and its application concludes the proof.
\end{proof}

  \begin{rmk}\label{rmk:evaluation}
In the framework of deep integral RKBS, the above lemma states that the
evaluation functional at  a fixed $x\in\xx_0$ 
\[
(\mu , \nu)\mapsto f_{\nu}(f_\mu(x))
\]
is jointly weakly$^*$ continuous from $ B_{\mm(\Th_0,\xx_{1})}(r_0)
\times B_{\mm(\Th_1,\xx_{2})}(r_1) $ to $\xx_2$ endowed with the weak
topology. The proof for $L=2$ can be easily extended to cover the case
$L>2$. 
  \end{rmk}
  
\end{document}